\documentclass[a4paper, 11pt]{article}  
\usepackage{amsmath,amssymb,amsfonts,amsthm}
\usepackage{algorithm}
\usepackage[noend]{algorithmic}
\usepackage{graphicx}
\usepackage{comment}
\usepackage{xcolor}
\usepackage{xspace}
\usepackage{bbold}
\usepackage{tikz,pgfplots}
\pgfplotsset{compat=newest}
\usepackage{caption}
\usepackage{subcaption}
\usepackage{wrapfig}
\usepackage{ifthen}
\usepackage{url}
\usepackage{hyperref}
\usepackage{fullpage}

\usepackage[square,numbers]{natbib}
\bibliographystyle{abbrvnat}

 
\newcommand\as{a.s.\xspace}
\newcommand\alg{BTT\xspace}
\newcommand{\0}{\mathbb{0}}
\newcommand{\1}{\mathbb{1}}
\newcommand{\im}{\textup{Im}}

\def\frechet{Fr\'echet\xspace}

 \def\C{\mathcal{C}} 
  \def\L{\mathcal{L}}
  
 \def\G{\mathcal{G}} 
 \def\T{\mathcal{T}} \def\L{\mathcal{L}}
  \def\B{\mathcal{B}}
  
\def\M{\mathcal{M}} \def\X{\mathcal{X}} 
 \def\M{\mathcal{M}}

 \def\dN{\mathbb{N}} 
\def\dR{\mathbb{R}}

\def\eps{\varepsilon}

\newcommand{\Cpp}{C\raise.08ex\hbox{\tt ++}\xspace}

\newtheorem{theorem}{Theorem}
\newtheorem{corollary}{Corollary}
\newtheorem{lemma}{Lemma}
\newtheorem{definition}{Definition}

\def\switcherarg{arxiv}

\newcommand{\switcher}[2] 
{ 
  \ifthenelse{\equal{\switcherarg}{icra}}
  {\textbf{$^\textup{ICRA}$} #1}
  {#2}
}


\title{Efficient Sampling-Based Bottleneck Pathfinding over Cost
  Maps}

\author{Kiril Solovey$^*$ and Dan Halperin\thanks{Kiril
    Solovey and Dan Halperin are with the Blavatnik School of Computer
    Science, Tel Aviv University, Israel.  Email:
    \texttt{\{kirilsol,danha\}@post.tau.ac.il}. This work has
    been supported in part by the Israel Science Foundation (grant
    no.~1102/11), by the Blavatnik Computer Science Research Fund, and
    by the Hermann Minkowski--Minerva Center for Geometry at Tel Aviv
    University. Kiril Solovey is also supported by the Clore Israel
    Foundation.} }

\begin{document}

\maketitle
\pagestyle{empty}

\begin{abstract}
  We introduce a simple yet effective sampling-based planner that is
  tailored for \emph{bottleneck pathfinding}: Given an
  implicitly-defined cost map $\M:\dR^d\rightarrow \dR$, which assigns
  to every point in space a real value, we wish to find a path
  connecting two given points, which minimizes the maximal value with
  respect to~$\M$. We demonstrate the capabilities of our algorithm,
  which we call \emph{bottleneck tree} (\alg), on several challenging
  instances of the problem involving multiple agents, where it
  outperforms the state-of-the-art cost-map planning technique
  T-RRT*. In addition to its efficiency, \alg requires the tuning of
  only a single parameter: the number of samples.  On the theoretical
  side, we study the asymptotic properties of our method and consider
  the special setting where the computed trajectories must be monotone
  in all coordinates. This constraint arises in cases where the
  problem involves the coordination of multiple agents that are
  restricted to forward motions along predefined paths.
\end{abstract}                  


\section{Introduction}
Motion planning is a widely studied problem in robotics. In its basic
form it is concerned with moving a robot between two given
configurations while avoiding collisions with obstacles. Typically, we
are interested in paths of high quality, which lead to lower energy
consumption, shorter execution time, or safer execution for the robot
and its surrounding. One of the most studied quality measures is 
path length, which was first studied from the combinatorial and
geometric perspective~\cite{Mitchell04} and has recently gained
popularity with the introduction of PRM*, RRT*~\cite{KarFra11} and subsequent
work. Another popular measure is the clearance (see,
e.g,~\cite{WeiETAL07}) of a path: the \emph{clearance} of a particular
robot configuration is the distance to the closest forbidden
configuration. The goal here is to find a path that maximizes the minimum
clearance of the configurations along it. 

The latter example is a special case of the \emph{bottleneck
  pathfinding} problem: we are given an implicitly-defined cost map
$\M:\C\rightarrow \dR$ which assigns to every configuration $c\in \C$
of the robot a value; the goal is to find, given start and target
configurations, a continuous path $\nu:[0,1]\rightarrow \C$, which
minimizes the expression $\max_{\tau\in [0,1]}\{\M(\nu(\tau))\}$.

Bottleneck pathfinding can also arise in settings involving multiple
robots. However, in most cases multi-robot motion planning is
computationally intractable (see,
e.g.,~\cite{hss-cmpmio,SolHal15}), even
when one is only concerned with finding \emph{a} solution. Thus,
\emph{decoupled} planners (see,
e.g.,~\cite{vanBerOve05,SimETAL02,AltETAL16}) usually construct a set
of $d$ paths---one for each robot---and attempt to coordinate between
the robots along their predefined paths in order to avoid
collisions. Now suppose that we also wish to find the \emph{safest}
coordination---the one which keeps the robots in a maximal distance
apart. This problem again, can be reformulated as a
bottleneck-pathfinding problem, where $d$ is the dimension of the
effective search space. In particular, denote by
$\sigma_i:[0,1]\rightarrow \C$ the path assigned for robot $i$, where
$1\leq i\leq d$, and define for a given $d$-dimensional point
$x=(x_1,\ldots,x_m)\in [0,1]^d$ the cost map
$\M(x):=1/\max_{1\leq i<j\leq c}\|\sigma(x_i)-\sigma(x_j)\|$. In other
cases, the robots are restricted a priori to predefined paths, as in
factory assembly lines~\cite{SpeETAL13}, aviation
routes~\cite{CafDur14} and traffic intersections~\cite{DasMou15}.

An interesting complication that typically arises from the examples of
the previous paragraph, is that each robot must move forward along its
path and cannot backtrack. This restriction induces paths that are
\emph{monotone} in each of the coordinates in the search space
$[0,1]^d$.  This is reminiscent of the notion of \emph{\frechet
  matching}, which can be viewed as another instance of bottleneck
pathfinding. \frechet matching is a popular similarity measure between
curves which has been extensively studied by the
computational-geometry community (see, e.g.,~\cite{AltGod95,HarRai14})
and recently was used in
robotics~\cite{VosETAL15,PokETAL16,HolSri16}. This matching takes into
consideration not only the ``shape'' of curves but also the order in
which the points are arranged along them. \vspace{3pt}

\noindent\textbf{Contribution.}
Rather than tackling each problem individually, we take a
sampling-based approach and develop an effective planner that can cope
with complex instances of the bottleneck-pathfinding problem.  We
demonstrate the capabilities of our technique termed \emph{bottleneck
  tree} (\alg) on a number of challenging problems, where it
outperforms the state-of-the-art technique T-RRT* by several orders of
magnitude. In addition to its efficiency, \alg requires the tuning of
only a single parameter: the number of samples. On the theoretical
side, we study the asymptotic properties of our method and consider
the special setting where the returned paths must be monotone in all
coordinates.

Notice that the monotonicity requirement makes the problem harder,
particularly in the sampling-based setting. Consider for example a PRM
$G$ that is constructed using a connection radius $r_n$ and $n$
samples. By removing all the non-monotone edges from $G$ we obtain a
much sparser graph $G'$: informally, $G'$ may retain only a $1/2^d$
fraction of the edges of $G$, where $d$ is the dimension of the
problem. Fortunately, our analysis of \alg indicates that in order to
guarantee optimality in the monotone regime, the connection radius
does not have to be drastically increased over its value in the
non-monotone case of say PRM.

We have already considered sampling-based bottleneck pathfinding in a
previous work~\cite{SolHal16}. However, there we employed a PRM-based
approach which is far inferior to \alg. Already when working in a
four-dimensional C-space, the PRM-base approach, which seeks to
explore the {\it entire} C-space, becomes prohibitively expensive. In
contrast, \alg quickly produces low-cost solutions even in a
seven-dimensional
space. 
We also provide here stronger theoretical analysis of our new
technique for the monotone case. These differences are discussed in
more detail in Sections~\ref{sec:theory} and~\ref{sec:experiments}.

In Section~\ref{sec:related} we review related work. In
Section~\ref{sec:preliminaries} we provide a formal definition of
bottleneck pathfinding. In Section~\ref{sec:algorithm} we describe our
\alg algorithm for the problem. In Section~\ref{sec:theory} we provide
an asymptotic analysis of the method in the monotone case. In
Section~\ref{sec:experiments} we report on experimental results and
conclude with a discussion and future work in
Section~\ref{sec:discussion}.

\section{Related work}\label{sec:related}
Particular instances of the basic motion-planning problem involving a
small number of degrees of freedom can be solved efficiently in a
complete manner~\cite{Sharir04}. Complete planners can even cope with
scenarios involving multiple robots, if some assumptions are made
about the input (see,
e.g.,~\cite{SolYuZamHal15,abhs-unlabeled14,tmk-cap13}. However, in
general motion planning is computationally
intractable~\cite{hss-cmpmio,SolHal15,sy-snp84,Rei79}. Thus, most of
the recent efforts in this area are aimed at the development of
sampling-based planners, which attempt to capture the connectivity of
the free space using random sampling.

We continue our literature review with general-purpose sampling-based
planners, with an emphasis on planners that are applicable to settings
involving cost maps. We then proceed to specific examples of
bottleneck pathfinding that were studied from the
combinatorial-algorithmic perspective.

\subsection{Sampling-based motion planning}
Early sampling-based planners such as PRM~\cite{kslo-prm} and
RRT~\cite{l-rert} have focused on finding \emph{a} solution. Although
some efforts were made to understand the quality of paths produced by
such planners~\cite{RavETAL11,NecETAL10,GerOve07} this issue remained
elusive until recently. In their influential work Karaman and
Frazzoli~\cite{KarFra11} introduced the planners PRM* and RRT*, which
were shown to be \emph{asymptotically optimal}, i.e., guaranteed to
return a solution whose cost converges to the optimum. Several
asymptotically-optimal planners that focus on the path-length cost
have later emerged~\cite{ArsTsi13,JanETAL15,SalHal14a}.  In our recent
work~\cite{SolETAL16} we develop a general framework for the analysis
of theoretical properties of a variety of sampling-based planners
through a novel connection with the theory of random geometric graphs.

We now proceed to discuss sampling-based planning in the presence of
an underlying cost map $\M$, as in bottleneck pathfinding. RRT*
theoretically guarantees to converge to the optimum even when the cost
of the path is computed with respect to~$\M$. However, in practice
this convergence is very slow, since RRT* does not take into account
the cost of $\M$ while exploring $\C$, and puts too much effort in the
traversal of regions of high cost. On the other hand,
T-RRT~\cite{JaiETAK10} equips RRT with a \emph{transition test}, which
biases the exploration towards low-cost regions of~$\M$. However,
T-RRT does not take path quality into consideration and cannot improve
the cost of a solution that was already obtained. Consequently, this
planner provides no optimality guarantees.  The T-RRT*
algorithm~\cite{DevETAL16} combines RRT* with T-RRT, which yields an
asymptotically-optimal planner that also works well in practice.  The
authors of~\cite{DevETAL16} test T-RRT* for the case of \emph{integral
  cost}, which sums the cost of the configurations along the path, and
\emph{mechanical cost}, which sums the positive cost variations along
the path. We mention that T-RRT* is also applicable to the bottleneck
cost, although this setting was not tested in that work. Lastly, we
note that we are not aware of other techniques that consider planning
in the presence of a cost map in the general setting of sampling-based
planning.

\subsection{Bottleneck pathfinding and other problems} 
Cost maps typically arise as \emph{implicit} underlying structures in
many problems involving motion and path planning. de Berg and van
Kreveld~\cite{BerKre97} consider the problem of path planning over a
\emph{known} mountainous region. They describe data structures that
can efficiently obtain, given two query points (which are the start
and goal of the desired path), paths with various properties, such as
those that strictly descend and paths that minimize the maximal height
obtained, i.e., bottleneck paths.

We have already mentioned the problem of high-clearance paths. For the
simple case of a disc robot moving amid polygonal obstacles this
problem can be solved efficiently by constructing a Voronoi diagram
over the sites that consist of the obstacles~\cite{ODuYap85}. There is
an obvious trade-off between clearance and path length, a short paths
tend to get very close to the obstacles. The work by Wein et
al.~\cite{WeiETAL07} describes a data structure that given a clearance
threshold $\delta>0$ returns the shortest path with that
clearance. Agarwal et al.~\cite{AgaETAL16} consider a particular cost
function suggested by Wein et al.~\cite{WeiETAL08} that combines path
length and clearance and develop an efficient approximation algorithm
for this case.

Perhaps the most popular example of bottleneck pathfinding from the
recent years is \emph{\frechet matching}, which is concerned with
quantifying the similarity of two given curves: the \frechet distance
between two given curves can be described as the shortest leash that
allows a person to walk her dog, where the former is restricted to
move along the first curve and latter along the other. It is generally
assumed that this quantity tends to be more informative when only
\emph{forward motions} along the curves are permitted, which transform
into \emph{monotone} paths in the search space. \frechet matching has
been extensively studied for the case of two curves: several efficient
techniques that solve the problem exist (see,
e.g.,~\cite{AltGod95,BucBucMeuMul14}). The problem can be extended to
multiple curves although only algorithms that are exponential in the
number of curves are known~\cite{HarRai14,BucETAL16}. Several papers
consider extensions of the problem involving more complex input
objects~\cite{BucBucWen08,BucBucSch08} or cases where additional
constraints such as obstacles are imposed on the
``leash''~\cite{CooWen10,ChaETAL10}. Finally, we mention our recent
work~\cite{SolHal16} where we apply a sampling-based PRM-like
technique for solving several \frechet-type problems.

\section{Preliminaries}\label{sec:preliminaries}
We provide a formal definition of the bottleneck-pathfinding problem.
For some fixed dimension $d\geq 2$, let $\M:[0,1]^d\rightarrow \dR$ be
a cost map that assigns to every point in $[0,1]^d$ a value in $\dR$.
We will refer from now on to $[0,1]^d$ as the \emph{parameter space}
in order to distinguish it from the configuration space of the
underlying problem.

Notice that the definition of $\M$ does not imply that we restrict our
discussion to the setting of a single robot whose configuration space
is $[0,1]^d$. For instance, $[0,1]^d$ can represent the parameter
space of $d$ curves of the form $\sigma_i:[0,1]\rightarrow \C_i$,
where for every $1\leq i\leq d$, $\sigma_i$ describes that motion of
robot $i$ in the configuration space $\C_i$. Typically in such
settings involving multiple robots having predefined paths the robots
are only permitted to move forward along their paths and never
backtrack. This restriction induces \emph{monotone} motions in
$[0,1]^d$. Given two points
$x=(x_1,\ldots,x_d),y=(y_1,\ldots,y_d)\in \dR^d$ the notation
$x\preceq y$ implies that for every $1\leq i\leq d$ it hold that
$x_i\leq y_i$. Additionally, we use the notation $x\preceq_{\delta}y$
for $\delta>0$ to indicate that $x\preceq y$ and for every
$1\leq i\leq d$ it hold that $y_i - x_i \geq \delta$.

For simplicity we will assume that the goal consists of planning paths
between the points $\0,\1\in [0,1]^d$, where
$\0=(0,\ldots,0), \1=(1,\ldots,1)$.

\begin{definition}
  A plan $\nu:[0,1]\rightarrow [0,1]^d$ is a continuous path in
  $[0,1]^d$ that satisfies the following two constraints:
  (i)~$\nu(0)=\0,\nu(1)=\1$; (ii)~it is monotone in each of the $d$
  coordinates, i.e., for every $0\leq \tau \leq \tau'\leq 1$ and
  $\nu(\tau)\preceq \nu(\tau')$.
\end{definition}

Given a path (or a plan) $\nu$, its \emph{bottleneck cost} is defined
to be $\M(\nu)=\max_{\tau\in [0,1]}\M(\nu(\tau))$. In
Section~\ref{sec:experiments} we will address specific examples of the
following problem:

\begin{definition}
  Given a cost map $\M:[0,1]^d\rightarrow \dR$ the (monotone)
  bottleneck-pathfinding problem consists of finding a plan $\nu$
  which minimizes $\M(\nu)$.
\end{definition}

\section{Bottleneck-tree planner}\label{sec:algorithm}

\begin{algorithm}\caption{\textsc{bottleneck-tree}$(n,r_n)$\label{alg:btt}}
  \begin{algorithmic}[1]
    \STATE $V := \{\0,\1\} \cup \texttt{sample}(n)$;
    \FORALL {$x\in V$}
    \STATE $c(x):=\infty$; $p(x):=\textup{null}$
    \ENDFOR
    \STATE $c(\0):= \M(\0)$
    \WHILE {$\exists x\in V\textup{ such that }c(x)<\infty$} 
    \STATE $z:= \texttt{get\_min}(V)$
    \IF {$z==\1$} 
    \RETURN $\texttt{path}(T,\0,\1)$ 
    \ENDIF
    \STATE $V := V \setminus \{z\}$ 
    \STATE $N_z:= \texttt{near}(z,V,r_n)$
    \FORALL {$x\in N_z$} 
    \IF {$z\preceq x$ \textbf{and} $c(z)<c(x)$}
    \STATE $c_{\textup{new}}:= \max\{c(z),\M(z,x)\}$ 
    \IF {$c_{\textup{new}}<c(x)$}
    \STATE $c(x):=c_{\textup{new}}$; $p(x):=z$
    \ENDIF
    \ENDIF
    \ENDFOR
    \ENDWHILE
    \RETURN $\emptyset$
  \end{algorithmic} 
\end{algorithm}

In Algorithm~\ref{alg:btt} we describe our sampling-based
technique for bottleneck pathfinding, which we call bottleneck tree
(\alg). The algorithm can be viewed as a lazy version of PRM which
explores the underlying graph in a Dijkstra-like fashion, according to
cost-to-come, with respect to the bottleneck cost over $\M$. Thus, it
bears some resemblance to FMT*~\cite{JanETAL15} since the two
implicitly explore an underlying PRM. However, FMT* behaves very
differently from \alg since it minimizes the path-length cost.

\alg accepts the number of samples $n$
and the connection radius $r_n$ as parameters.  It maintains a
directed minimum spanning tree $\T$ of an underlying \emph{monotone}
PRM or a random geometric graph (RGG)~\cite{SolETAL16}\footnote{PRMs
  and RGGs are equivalent in our context.}:
\begin{definition}\label{def:monotone_rgg}
  Given $n\in \dN_+$ denote by $\X_n=\{X_1,\ldots,X_n\}$, $n$ points
  chosen independently and uniformly at random from $[0,1]^d$. For a
  connection radius $r_n>0$ denote by $\G_n=\G(\{\0,\1\}\cup\X_n;r_n)$
  the \emph{monotone} RGG defined over the vertex set
  $\{\0,\1\}\cup\X_n$. $\G_n$ has a directed edged $(x,y)$ for
  $x,y\in \{\0,\1\}\cup\X_n$ iff $x\preceq y$ and $\|x-y\|\leq r_n$,
  where $\|\cdot\|$ represents the standard Euclidean distance.
\end{definition}
The algorithm keeps track of the unvisited vertices of $\G_n$ using
the set $V$. It maintains for each vertex $x$ the cost-to-come over
the visited portion of $\G_n$ and its parent for this specific path,
are denoted by $c(x)$ and $p(x)$, respectively.

In line~1 we initialize $V$ with $\0,\1$, and a set of $n$ uniform
samples in $[0,1]^d$, which are generated using $\texttt{sample}$.  In
lines~2-3 the cost-to-come and the parent attributes are
initialized. In each iteration of the \textbf{while} loop, which
begins in line~5, the algorithm extracts the vertex $z\in V$ which
minimizes $c(z)$ , using \texttt{get\_min} (line~6). If~$z$ turns out
to be~$\1$ (line~7), then a path from $\0$ to $\1$ is returned. This
is achieved using \texttt{path}, which simply follows the ancestors of
$\1$ stored in the $p$ attribute until reaching~$\0$. Otherwise, $z$
is removed from $V$ (line~9) and a subset of its neighbors in $\G_n$
that are also members of $V$ are obtained using \texttt{near}
(line~10). For each such neighbor $x$ (line~11) it is checked whether
$z\preceq x$ and whether the cost-to-come of $x$ can potentially
improve by arriving to it from $z$ (line~12). An alternative
cost-to-come for $x$ which is induced by a path that reaches from $z$
is calculated (line~13). In particular, this cost is the maximum
between the cost-to-come of $z$ and the cost of the edge\footnote{The
  cost of an edge with respect to a given cost map can be approximated
  by dense sampling along the edge, as is customary in motion
  planning.} from $z$ to $x$ with respect to $\M$. If the alternative
option improves its cost then the parents and the cost of $x$ are
updated accordingly (line~15).

In preparation for the following section, where we study the
asymptotic behavior of \alg, we mention here that given that the
underlying graph~$\G_n$ contains a path from~$\0$ to~$\1$, \alg finds
the minimum bottleneck path over~$\G_n$, namely the path whose maximal
$\M$ value is minimized. We omit a formal proof of this fact here. It
is rather straightforward and is very similar similar to the
completeness proof of the standard Dijkstra algorithm, but for the
bottleneck cost; see, e.g.,~\cite[Theorem~25.10]{CorETAL09}.

\section{Asymptotic analysis}\label{sec:theory}
In the previous section we have argued that \alg returns a solution
that minimizes the bottleneck cost over a fixed graph $G\in \G_n$.  A
major question in the analysis of sampling-based motion-planning
algorithms is under what conditions does a discrete graph structure
captures the underlying continuous space. For this purpose, we study
the asymptotic properties of $\G_n=\G(\{\0,\1\}\cup\X_n;r_n)$
(Definition~\ref{def:monotone_rgg}). To the best of our knowledge,
existing proofs concerning asymptotic optimality or completeness of
sampling-based planners (see, e.g.,~\cite{KarFra11,JanETAL15}) only
apply to the non-monotone (and standard) setting. Such results cannot
be extended as-are to the analysis of $\G_n$. The remainder of this
section is dedicated to strengthening our our analysis concerning the
special and harder case that imposes the monotonicity requirement.

Previously~\cite{SolHal16}, we were able to give a convergence
guarantee in the monotone case by introducing a connection radius
($r_n$) of the form $f(n)\left(\frac{\log n}{n}\right)^{1/d}$, where
$f(n)$ is any function that diminishes as $n$ tends to
$\infty$. Unfortunately, this statement does not indicate which
specific function should be used in practice, and an uncareful
selection of $f$ could lead to undesired behavior of \alg. In
particular, if $f$ grows too slowly with $n$ the graph becomes
disconnected and \alg may not be able to find a solution at all. On
the other hand, $f$ that grows too fast with $n$ could result in a
needlessly dense graph which will take very long time to traverse.  In
this paper, we replace this somewhat abstract definition with a
connection radius of the form
$\gamma\left(\frac{\log n}{n}\right)^{1/d}$, where $\gamma$ is a
constant depending only on $d$. This is a more typical structure of
bounds in the context of random geometric graphs and sampling-based
planners. Furthermore, we show in Section~\ref{sec:experiments} that
this formulation of $r_n$ leads to a quick convergence of \alg to the
optimum.

For the purpose of the analysis, we define a supergraph of $\G_n$,
denoted by $\L_n=\L(\{\0,\1\}\cup\X_n;r_n)$, which corresponds to the
standard and non-monotone random geometric graph
(see~\cite{SolETAL16}). In particular, it connects every pair of
vertices $x,y\in\{\0,\1\}\cup\X_n$ with an edge if and only if
$\|x-y\|\leq r_n$, regardless of whether the monotonicity constraint
is satisfied.

In order to guarantee asymptotic optimality, one typically has to make
some assumptions with respect to the solution one hopes to converge
to. First, we introduce basic definitions.  Denote by $\B_{r}(x)$ the
$d$-dimensional Euclidean ball of radius $r>0$ centered at
$x\in \dR^d$ and $\B_{r}(\Gamma) = \bigcup_{x \in \Gamma}\B_{r}(x)$
for any $\Gamma \subseteq \dR^d$.  Given a curve
$\nu:[0,1]\rightarrow \dR^d$ define
$\B_r(\nu)=\bigcup_{\tau\in[0,1]}\B_r(\nu(\tau))$.  Additionally,
denote the image of a curve $\nu$ by
$\im(\nu)=\bigcup_{\tau\in[0,1]}\{\nu(\tau)\}$.

\begin{definition}\label{def:rgg}
  Given $\M:[0,1]^d\rightarrow \dR$, a plan
  $\nu:=[0,1]\rightarrow [0,1]^d$ is called \emph{robust} if for every
  \mbox{$\eps>0$} there exists \mbox{$\delta>0$} such that for any
  plan $\nu'$ for which \mbox{$\im(\nu)\subset \B_\delta(\nu)$} it
  follows that $\M(\nu')\leq (1+\eps)\M(\nu)$. A plan that attains the
  infimum cost, over all robust plans, is termed \emph{robustly
    optimal} and is denoted by~$\nu^*$.
\end{definition}

The following theorem is the main theoretical contribution of this
paper. The constant $\gamma$, which is defined below, was first
obtained in~\cite{JanETAL15}, for a different problem, and involving
non-monotone connections.

\begin{theorem}\label{thm:main_new}
  Suppose that $\L_n=\L(\X_n\cup\{\0,\1\};r_n)$ with
  $r_n=\gamma \left(\frac{\log n}{n}\right)^{1/d}$, where
  $\gamma =(1+\eta)2(d\theta_d)^{-1/d}$ with $\theta_d$ representing
  the Lebesgue measure of the unit Euclidean ball, and $\eta>0$ is any
  positive constant. Then $\L_n$ contains a path $\nu_n$ connecting
  $\0$ to $\1$ which has the following properties \as: \textbf{(i)}
  $\M(\nu)=(1+o(1))\M(\nu^*)$; \textbf{(ii)} at most $o(1)$ fraction
  of the edges along $\nu_n$ are non-monotone.
\end{theorem}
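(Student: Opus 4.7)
The plan is to reduce Theorem~\ref{thm:main_new} to a standard random-geometric-graph covering argument applied to a strictly-monotone perturbation of $\nu^*$, and then to count non-monotone edges using that strict monotonicity. First I would invoke robustness: for every $\eps>0$, Definition~\ref{def:rgg} provides $\delta=\delta(\eps)>0$ so that every plan $\nu'$ with $\im(\nu')\subset \B_\delta(\nu^*)$ satisfies $\M(\nu')\leq(1+\eps)\M(\nu^*)$. Fix a sequence $\eps_n\to 0$ with corresponding $\delta_n\to 0$ and introduce the diagonal-shifted plan $\tilde\nu_n(t):=(1-\alpha_n)\nu^*(t)+\alpha_n t\,\1$ with $\alpha_n=\delta_n/\sqrt d$. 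This $\tilde\nu_n$ is monotone (a convex combination of monotone paths), still lies in $\B_{\delta_n}(\nu^*)$, and additionally has every coordinate derivative bounded below by $\alpha_n$ in the $t$-parametrization; this extra slack is what will later be used to force most edges to be monotone.

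Next I would set up the covering. Reparametrize $\tilde\nu_n$ by arclength over $[0,\tilde L_n]$ and place waypoints $y_0=\0,y_1,\ldots,y_K=\1$ at arclength spacing $h_n:=r_n-2\rho_n$, where $\rho_n=(1-\mu)r_n/2$ is the covering radius and $\mu=\mu(\eta)>0$ a small constant fixed below. A Chernoff-style estimate gives $\Pr[\B_{\rho_n}(y_k)\cap\X_n=\emptyset]\leq \exp(-n\theta_d\rho_n^d)$, so a union bound over $K=O(1/r_n)$ waypoints requires $n\theta_d\rho_n^d\geq (1+o(1))d^{-1}\log n$. Substituting $r_n=\gamma(\log n/n)^{1/d}$ with the prescribed $\gamma=(1+\eta)2(d\theta_d)^{-1/d}$ reduces this to $(1-\mu)(1+\eta)>1$, which holds for any $\mu<\eta/(1+\eta)$; this is exactly where the specific constant $\gamma$ is spent, paralleling the FMT*-style analysis of~\cite{JanETAL15}. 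Once coverage is a.s.\ in place, pick any $x_k\in\X_n\cap\B_{\rho_n}(y_k)$ (with $x_0:=\0$, $x_K:=\1$) and let $\nu_n$ be the polyline through $x_0,\ldots,x_K$. By the triangle inequality $\|x_{k+1}-x_k\|\leq h_n+2\rho_n=r_n$, so each segment is an edge of $\L_n$; and each $x_k$ lies within $\rho_n\to 0$ of $\im(\tilde\nu_n)\subset\B_{\delta_n}(\nu^*)$, so $\im(\nu_n)\subset\B_{\delta_n(1+o(1))}(\nu^*)$ and robustness delivers (i).

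For (ii), strict monotonicity of $\tilde\nu_n$ supplies a uniform per-coordinate floor of order $\alpha_n h_n$ on $y_{k+1}-y_k$ (from the diagonal component of the perturbation), so $(x_{k+1}-x_k)_i\geq c\alpha_n h_n-2\rho_n$ for an absolute constant $c>0$. My plan is a per-coordinate averaging argument: for each $i\in\{1,\ldots,d\}$ the identity $\sum_k (y_{k+1}-y_k)_i = 1$ bounds how many indices can have their $\nu^*$-contribution below a threshold, while the diagonal floor $c\alpha_n h_n$ handles the remainder; a union bound over the $d$ coordinates then yields $o(K)$ non-monotone edges. The main obstacle is calibrating $\alpha_n\to 0$ carefully. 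The ratio $2\rho_n/h_n$ is rigidly pinned to a positive constant ($\approx 1/\eta$) by the coverage requirement, so one cannot force $\alpha_n h_n \gg 2\rho_n$ on every edge; the argument instead has to show that the density of bad waypoints vanishes with $n$, exploiting concentration of the i.i.d.\ sample offsets inside their $\rho_n$-balls together with the finite total-variation budget of $\nu^*$ in each coordinate. I would let $\alpha_n$ decay at a deliberately slow rate such as $\alpha_n=1/\log\log n$, tune $\mu$ against the fixed gap $\eta$, and then chase constants through the averaging and union-bound step; this calibration is the principal technical nuisance of the proof.
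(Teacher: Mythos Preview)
Your covering argument for part~(i) is essentially the standard one and would go through, but part~(ii) has a genuine gap, and the proposed patches (averaging, concentration of offsets) do not close it.

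The issue is the size of your per-coordinate floor. Your diagonal shift $\tilde\nu_n=(1-\alpha_n)\nu^*+\alpha_n t\,\1$ yields $(y_{k+1}-y_k)_i\geq c\,\alpha_n h_n$ with $\alpha_n\to 0$, so the floor is $o(r_n)$, while the sample offsets $x_k-y_k$ are uniformly spread in balls of radius $\rho_n=\Theta(r_n)$; there is no concentration of a single uniform point toward the ball center. Consider $\nu^*$ an $L$-shape in $d=2$: along the first leg every waypoint has $(y_{k+1}-y_k)_2\approx c\,\alpha_n h_n=o(r_n)$, so $(x_{k+1}-x_k)_2$ is essentially the difference of two independent offsets of size $\Theta(r_n)$ and is negative with probability bounded away from~$0$. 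That already makes a constant fraction of edges non-monotone, regardless of how slowly $\alpha_n\to 0$. Your averaging identity $\sum_k(y_{k+1}-y_k)_i=1$ does not help: each term is at most $h_n$, and the budget can be (and in the $L$-shape is) concentrated on a constant fraction of the indices, leaving a constant fraction with increment $o(r_n)$. Concentration of the indicator sum around its mean does not help either, since the mean itself is bounded away from~$0$.

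What the paper does differently is to first decouple $\eps$ (hence $\delta$) from $n$, and to replace $\nu^*$ not by a small diagonal perturbation but by a \emph{piecewise-linear} plan through a fixed, $n$-independent set of points $q_1=\0\preceq_{\delta/2}q_2\preceq_{\delta/2}\cdots\preceq_{\delta/2}q_k=\1$ on $\nu^*$. Each straight segment $\nu_{j,j+1}$ then has every coordinate increasing at a rate bounded below by a constant, so consecutive landmarks satisfy $p_i\preceq_{\beta r_n}p_{i+1}$ with $\beta>0$ \emph{independent of $n$}. Monotonicity of the edge $(x_i,x_{i+1})$ is then guaranteed whenever both samples fall in the smaller balls $\B_{\beta r_n/2}(p_i)$, and the $(\alpha,\beta)$-covering lemma of~\cite{JanETAL15} shows that at most an $\alpha$-fraction of these smaller balls are empty, for any fixed $\alpha>0$. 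Only after this is established for every fixed pair $(\eps,\alpha)$ does one pass to sequences $\eps_i,\alpha_{i'}\to 0$ via a diagonal argument. The step you are missing is precisely this piecewise-linear replacement that manufactures an $n$-independent $\beta$; once you have it, the rest of your outline (covering, union bound, counting bad balls) lines up with the paper's proof.
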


This theorem implies that $\L_n$ contains a path $\nu$ that is
\emph{almost} entirely monotone and its cost converges to the
optimum. Notice that in theory this does not necessarily mean that
\alg is asymptotically optimal, as the formulation of the algorithm in
Alg.~\ref{alg:btt} is concerned with paths that are \emph{entirely}
monotone. However, our experiments (Section~\ref{sec:experiments})
suggest that this theoretical gap can be bridged.

  \subsection{Proof of Theorem~\ref{thm:main_new}}
  This subsection is devoted to the proof of
  Theorem~\ref{thm:main_new}.  It relies on some components that were
  previously described in the works of Janson et al.~\cite{JanETAL15},
  Karaman and Frazzoli~\cite{KarFra11}, and our
  work~\cite{SolHal16}. However, we note that several ideas employed
  here are brand new and so is the final result.

  Our proof follows the common ``ball-covering''
  argument~\cite{KarFra11,JanETAL15,KavETAL96}, which shows that there
  exists a collection of samples from $\X_n$, that are located in the
  vicinity of ``landmark'' points that lie on some plan of
  interest. This guarantees that $\L_n$ contains a path that has a
  cost similar to the robustly optimal plan $\nu^*$. Then we use a
  refined ``$(\alpha,\beta)$ ball-covering''~\cite{JanETAL15}
  argument, which shows that many of the samples lie very closely to
  the landmarks---so close that most of them form monotone
  subchains. This induces the existence of paths that are
  almost-entirely monotone and have desirable bottleneck cost.

  We start by selecting a fixed $\eps>0$. As $\nu^*$ is robustly
  optimal, there exists $\delta>0$ such that for every plan $\nu$ with
  $\im(\nu)\subseteq \B_{\delta}(\nu^*)$ it follows that
  $\M(\nu)\leq (1+\eps)\M(\nu^*)$. Define $\ell=2/\delta$. Similarly
  to the proof of Theorem~6 in~\cite{SolHal16}, there exists a
  sequence of $\ell - 1 \leq k\leq \ell$ points
  $Q=\{q_1,\ldots, q_k\}\subset \im(\nu^*)$ where $q_1=\0,q_k=\1$, and
  $q_j\preceq_{\delta/2} q_{j+1}$ for any $1\leq j<k$. Note that
  $\ell$ is finite and independent of $n$.  This follows from the fact
  that $\0\preceq_1\1$ and every subpath of $\nu^*$ must ``make
  progress'' in relation to at least one of the coordinates. In
  particular, for every $1\leq j\leq d$ define
  $Q^j=\{q_1^j,\ldots,q_\ell^j\}\subset \im(\nu^*)$ to be a collection
  of $\ell$ points along $\nu^*$ with the following property: for
  every $1\leq i'\leq \ell$ the $j$th coordinate of $q_{i'}^j$ is
  equal to~$j\cdot 2/\delta$. Observe that for any $1\leq i'\leq \ell$
  there exists $1\leq j\leq d$ and $1\leq i''\leq \ell$ such that
  $\0\preceq_{(j-1)\cdot 2/\delta} q^j_{i''}$ and
  $\0\not \preceq_{(j-1)\cdot 2/\delta+\Delta} q^j_{i''}$ for any
  $\Delta>0$, i.e., one of the coordinates of $q^j_{i''}$ is equal to
  $j\cdot 2/\delta$ (see Figure~\ref{fig:thm_monotonicity}). Thus, it
  follows that $Q\subset \bigcup_{j=1}^d Q^j$.

\begin{figure}
  \centering
  \includegraphics[width=0.5\columnwidth, trim=0pt 0pt 0pt 0pt ,
  clip=true]{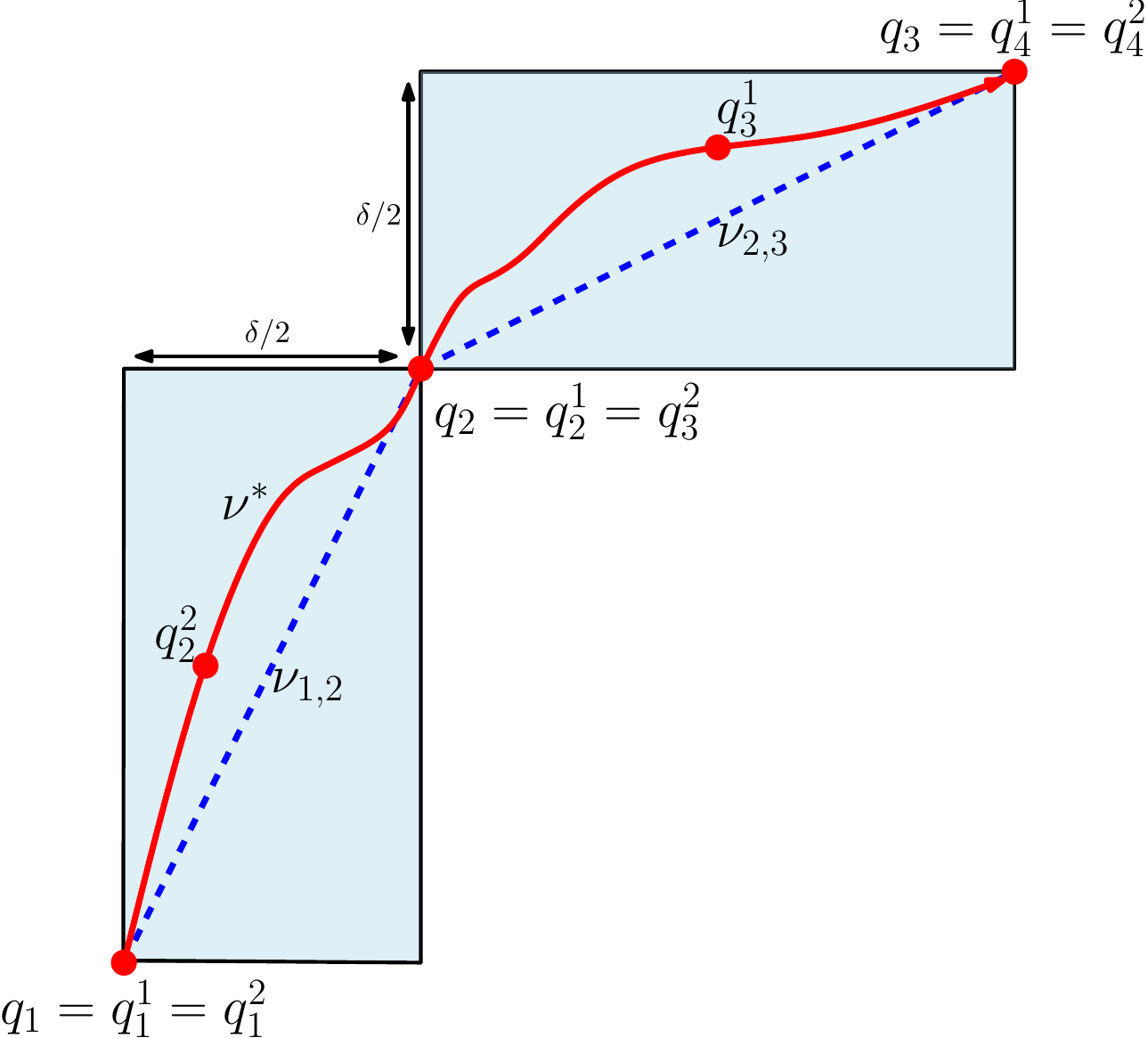}
  \caption{Visualization of the proof of Theorem~\ref{thm:main_new}
    for $d=2$. The red curve represents $\nu^*$, on
    which lie the points $Q^1\cup Q^2$ (represented by red
    bullets). Observe that $Q\subset Q^1\cup Q^2$. The dashed blue
    straight lines represent $\nu_{1,2},\nu_{2,3}$.
    \label{fig:thm_monotonicity}}
\end{figure}

As~$\nu^*$ can be arbitrarily complex we define a simplified plan,
which is located in the vicinity of $\nu^*$, and is consequently of
low cost. Firstly, for every $1\leq j<k$ denote by $\nu_{j,j+1}$ the
straight-line path from $q_j$ to $q_{j+1}$ (see
Figure~\ref{fig:thm_monotonicity}). Now, define $\nu_{1,k}$ to be the
concatenation of these $k-1$ subpaths. Observe that $\nu_{1,k}$ is a
plan.

Fix $\psi\in (0,1)$ and define $r'_n:=\tfrac{r_n}{2(2+\psi)}$. We
generate a set of $M_n$ ``landmark'' points
$P=\{p_1,\ldots, p_{_{\!_{M_n}}}\}$, which are placed along
$\nu_{1,k}$, i.e., $P\subset \im(\nu_{1,k})$, such that for every
$1\leq i\leq M_n-1$ it holds that $\|p_i-p_{i+1}\|\leq r'_n$. Note
that $p_1=\0$. For simplicity we also assume that $p_{_{\!_{M_n}}}=\1$.

Now define a set of balls centered at the landmarks. In particular,
for every $1\leq i\leq M_n$ define the set
$B_{n,i}:=\B_{r'_n}(p_i)$. Additionally, denote by $A_n$ the event
representing that for every $1\leq i\leq M_n$ it holds that
$\X_n\cap B_{n,i}\neq \emptyset$. We have the following lemma, which
is proven in~\cite[Lemma~4.4]{JanETAL15}:

\begin{lemma}\label{lem:pavone1}
  The event $A_n$ holds \as
\end{lemma}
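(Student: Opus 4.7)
The plan is to prove the lemma via a union bound over the $M_n$ ball-emptiness events, combined with a Chernoff-type tail bound for each event, and finally an application of Borel--Cantelli.

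First, I would bound $M_n$. Since $\nu_{1,k}$ is a concatenation of $k-1 = O(1)$ line segments (as $k \leq \ell$ is independent of $n$) with total Euclidean length at most $\sqrt{d}(k-1) = O(1)$, and consecutive landmarks lie at distance at most $r'_n = \Theta((\log n/n)^{1/d})$, it follows that $M_n = O(1/r'_n) = O((n/\log n)^{1/d})$.

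Next, I would estimate the probability that a fixed ball $B_{n,i}$ contains no sample. Because each landmark $p_i$ lies in $[0,1]^d$, the Lebesgue measure of $B_{n,i} \cap [0,1]^d$ is at least $2^{-d} \theta_d (r'_n)^d$, the worst case being $p_i$ at a corner of the cube (interior landmarks achieve the full volume $\theta_d (r'_n)^d$). Since $X_1,\ldots,X_n$ are i.i.d.\ uniform on $[0,1]^d$, independence gives
\[
\Pr[\X_n \cap B_{n,i} = \emptyset] \leq \bigl(1 - 2^{-d}\theta_d (r'_n)^d\bigr)^n \leq \exp\bigl(-n \cdot 2^{-d}\theta_d (r'_n)^d\bigr).
\]
Substituting $r'_n = \gamma/(2(2+\psi)) \cdot (\log n/n)^{1/d}$ together with $\gamma^d = (1+\eta)^d 2^d/(d\theta_d)$ reduces the right-hand side to $n^{-\kappa}$ for an explicit positive constant $\kappa = \kappa(d,\eta,\psi)$.

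Finally, a union bound yields $\Pr[A_n^c] \leq M_n \cdot n^{-\kappa} = O(n^{1/d-\kappa}(\log n)^{-1/d})$, which is summable in $n$ as soon as $\kappa > 1 + 1/d$. Borel--Cantelli then implies that $A_n$ holds for all sufficiently large $n$ almost surely. The main obstacle is the arithmetic verification that the choice $\gamma = (1+\eta)2(d\theta_d)^{-1/d}$, combined with a sufficiently small choice of the free proof parameter $\psi \in (0,1)$, forces $\kappa > 1 + 1/d$ for \emph{any} $\eta > 0$; this is precisely the computation performed in Lemma~4.4 of~\cite{JanETAL15}. If the naive corner bound is too loose, one can refine it by noting that only $O(1)$ landmarks lie within $r'_n$ of the boundary of $[0,1]^d$, so the $2^{-d}$ loss applies to a vanishing fraction of the balls, while the remaining ones enjoy the full measure $\theta_d(r'_n)^d$ and hence a larger effective $\kappa$.
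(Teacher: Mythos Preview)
The paper does not supply a proof; it simply cites \cite[Lemma~4.4]{JanETAL15}. Your union-bound sketch---bound $M_n=O((n/\log n)^{1/d})$ via the fixed length of $\nu_{1,k}$, bound each $\Pr[B_{n,i}\cap\X_n=\emptyset]$ by $\exp\bigl(-n\,|B_{n,i}\cap[0,1]^d|\bigr)$, then combine---is exactly the argument carried out in that cited lemma, so your approach and the paper's coincide.

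There is, however, a gap in your final step. You assert that choosing $\psi$ small forces $\kappa>1+1/d$ for \emph{every} $\eta>0$, so that $\sum_n\Pr[A_n^c]<\infty$ and Borel--Cantelli applies. That arithmetic does not go through. Substituting $r'_n=r_n/(2(2+\psi))$ and $\gamma^d=(1+\eta)^d\, 2^d/(d\theta_d)$ gives, even granting each ball its full interior volume (no $2^{-d}$ corner loss),
\[
\kappa \;=\; \frac{(1+\eta)^d}{d\,(2+\psi)^d}\;\le\;\frac{(1+\eta)^d}{d\cdot 2^d},
\]
which for small $\eta$ is well below $1+1/d$; your proposed interior/boundary refinement cannot close this gap, since the shortfall lies in the exponent itself, not in the corner factor. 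What the downstream argument in Section~\ref{sec:theory} actually uses is the weaker conclusion $\Pr[A_n]\to 1$ (this is the sense in which ``a.s.'' is employed here and in \cite{JanETAL15}), so the appropriate fix is to drop Borel--Cantelli and conclude $\Pr[A_n^c]=o(1)$ directly from the union bound rather than to chase summability.
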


\begin{corollary}\label{cor:quality}
  The graph $\L_n$ contains a path $\nu_n$ connecting $\0$ to $\1$
  such that $\M(\nu_n)\leq (1+\eps)\M(\nu^*)$ \as.
\end{corollary}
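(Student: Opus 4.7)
My plan is to construct $\nu_n$ explicitly by threading a sample through each landmark ball $B_{n,i}$ and connecting consecutive samples by straight-line edges of $\L_n$, then bound the deviation of the resulting polyline from $\nu^*$ via two applications of the triangle inequality, and finish by invoking the robustness of $\nu^*$. Conditional on the almost-sure event $A_n$ from Lemma~\ref{lem:pavone1}, for each $1\leq i\leq M_n$ I would select some $x_i\in \X_n\cap B_{n,i}$, choosing $x_1=\0$ and $x_{M_n}=\1$ (both available since $p_1=\0$ and $p_{M_n}=\1$ already lie in $B_{n,1}$ and $B_{n,M_n}$ respectively). The candidate path $\nu_n$ would then be the polygonal chain $x_1\to x_2\to \cdots\to x_{M_n}$.

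The first verification is that consecutive samples form edges of $\L_n$. The triangle inequality gives $\|x_i-x_{i+1}\|\leq \|x_i-p_i\|+\|p_i-p_{i+1}\|+\|p_{i+1}-x_{i+1}\|\leq 3r'_n=\tfrac{3}{2(2+\psi)}\,r_n<r_n$ since $\psi>0$, so each $(x_i,x_{i+1})$ is indeed an edge of $\L_n$. The second verification is a geometric-closeness bound: for any point $y$ on the segment $[x_i,x_{i+1}]$, the matching convex combination $y'$ on $[p_i,p_{i+1}]\subset \im(\nu_{1,k})$ satisfies $\|y-y'\|\leq r'_n$, so $\im(\nu_n)\subseteq \B_{r'_n}(\nu_{1,k})$. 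Combining this with the bound $\im(\nu_{1,k})\subseteq \B_{\delta/2}(\nu^*)$ coming from the construction of $Q$, one obtains $\im(\nu_n)\subseteq \B_{\delta/2+r'_n}(\nu^*)\subseteq \B_{\delta}(\nu^*)$ for all $n$ sufficiently large, since $r'_n\rightarrow 0$ as $n\rightarrow\infty$. The robustness property of $\nu^*$ (Definition~\ref{def:rgg}) then delivers $\M(\nu_n)\leq (1+\eps)\M(\nu^*)$, which is precisely the claim.

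The main technical point is to justify $\im(\nu_{1,k})\subseteq \B_{\delta/2}(\nu^*)$, which I would derive from the coordinate-level-set structure of $Q$. Since $\nu^*$ is monotone and $Q$ is assembled from the level-sets $Q^j$ that are spaced by $\delta/2$ in the $j$th coordinate, the plan $\nu^*$ advances by at most $\delta/2$ in each coordinate between any consecutive $q_j$ and $q_{j+1}$. Consequently both the subpath of $\nu^*$ from $q_j$ to $q_{j+1}$ and the straight-line shortcut $\nu_{j,j+1}$ lie in a common axis-aligned box of $\ell_\infty$-diameter at most $\delta/2$; absorbing the usual $\sqrt{d}$ Euclidean-versus-$\ell_\infty$ factor into the initial choice of $\delta$ (equivalently, shrinking $\delta$ at the outset by a constant depending only on $d$) converts this into the desired Euclidean bound. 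Beyond this packaging issue, everything else reduces to the two triangle-inequality computations above resting on Lemma~\ref{lem:pavone1}.
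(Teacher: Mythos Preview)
Your proof is correct and follows the same approach as the paper: select a sample in each landmark ball $B_{n,i}$, verify via the triangle inequality that consecutive samples are within $r_n$ and hence joined by edges of $\L_n$, then bound the distance from $\nu_n$ to $\nu^*$ by $\delta/2+r'_n\leq\delta$ and invoke robustness. You are in fact more explicit than the paper in several places---the edge-length computation $3r'_n<r_n$, the treatment of interior edge points via convex combinations, and the justification of $\im(\nu_{1,k})\subseteq\B_{\delta/2}(\nu^*)$---all of which the paper either asserts without detail (it simply states $\|p_i-p'_i\|\leq\delta/2$) or dismisses with ``similar arguments can be applied to the edges.''
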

\begin{proof}
  Suppose that $A_n$ holds. Then for every $1<i<M_n$ there exists
  $x_i\in \X_n$ such that $x_i\in B_{n,i}$. Observe that for any
  $1\leq i <M_n$ it holds that $\|x_i-x_{i+1}\|\leq r_n$ and so $\L_n$
  contains an edge from $x_i$ to $x_{i+1}$. Thus, we can define
  $\nu_n$ to be the path consisting of the points
  $\0,x_2,\ldots, x_{M_n-1}, \1$.

  For any $1<i<M_n$ denote by $p'_i$ a point along $\nu^*$ which is
  the closest to $p_i$. Observe that $\|p_i-p'_i\|\leq \delta /2$.
  This, combined with the fact that $\|x_i-p_i\|\leq r'_n$, implies
  that $\|x_i-p'_i\|\leq \delta / 2 + r'_n\leq \delta$, which then
  guarantees that $\M(x_i)\leq (1+\eps)\M(\nu^*)$, by definition of
  $\nu^*$. Similar arguments can be applied to the edges of
  $\nu_n$. Thus, $\M(\nu_n)\leq (1+\eps)\M(\nu^*)$. 
\end{proof}

In order to reason about the monotonicity of $\nu_n$ we make the
following crucial observation:

\begin{lemma}
  There exists a fixed constant $\beta\in (0,1)$ such that for any
  $1\leq i<M_n$ it holds that $p_i\preceq_{\beta r_n} p_{i+1}$.
\end{lemma}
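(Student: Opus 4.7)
The plan is to exploit the fact that $\nu_{1,k}$ is a concatenation of straight-line segments $\nu_{j,j+1}$, each with direction vector $q_{j+1}-q_j$ whose coordinates are \emph{all} bounded below by $\delta/2$ (by construction of $Q$) and whose norm is at most $\sqrt{d}$. Consequently every unit direction vector $(q_{j+1}-q_j)/\|q_{j+1}-q_j\|$ has each coordinate at least $\delta/(2\sqrt{d})$. This ``uniformly diagonal'' structure should force any two consecutive landmarks that lie on the same segment, provided they are not too close together, to exhibit a coordinatewise monotonicity gap of order $r_n$.

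First I would refine the landmark placement so that \textbf{(i)} each corner point $q_j$ is itself a landmark and \textbf{(ii)} consecutive landmarks on a common segment are spaced at distance in the interval $[r'_n/2,\,r'_n]$. Property (i) guarantees that no consecutive pair $p_i,p_{i+1}$ straddles two different segments. Property (ii) can be arranged for all sufficiently large $n$: since $\|q_{j+1}-q_j\|\geq \delta\sqrt{d}/2$ is a fixed positive constant independent of $n$ while $r'_n=r_n/(2(2+\psi))\to 0$, each segment $\nu_{j,j+1}$ is long enough to be uniformly subdivided into pieces of length within this interval.

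Next, fix such a consecutive pair $p_i,p_{i+1}$ lying on $\nu_{j,j+1}$ and write
\[
p_{i+1}-p_i \;=\; s\cdot \frac{q_{j+1}-q_j}{\|q_{j+1}-q_j\|}, \qquad s=\|p_{i+1}-p_i\|\geq r'_n/2.
\]
By the observation of the first paragraph, every coordinate of $p_{i+1}-p_i$ is at least $s\cdot \delta/(2\sqrt{d})\geq r'_n\delta/(4\sqrt{d})=\bigl(\delta/(8\sqrt{d}(2+\psi))\bigr)\,r_n$. Setting $\beta:=\delta/(8\sqrt{d}(2+\psi))$ then yields $p_i\preceq_{\beta r_n}p_{i+1}$, and $\beta\in(0,1)$ since $\delta\leq 2$, $d\geq 2$, and $\psi>0$.

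The only delicate point is arranging the refined landmark placement, but because the segment lengths $\|q_{j+1}-q_j\|$ are bounded away from zero independently of $n$, this is routine once $n$ is large enough; the rest of the argument is elementary geometry, so I do not expect a serious obstacle.
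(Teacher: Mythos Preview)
Your argument is correct. Both proofs rest on the same geometric fact---since $q_j\preceq_{\delta/2}q_{j+1}$, every segment $\nu_{j,j+1}$ has a direction vector all of whose coordinates are bounded below by a fixed positive constant, so a step of length $\Theta(r_n)$ along it yields a coordinatewise gap of order $r_n$---but they differ in how corners are handled. You force each $q_j$ to be a landmark, so no consecutive pair $p_i,p_{i+1}$ straddles two segments, and you enforce a lower spacing bound $r'_n/2$; together with the uniform estimate $\|q_{j+1}-q_j\|\le\sqrt d$ this gives one explicit constant $\beta=\delta/(8\sqrt d(2+\psi))$. The paper instead allows $p_i,p_{i+1}$ to lie on adjacent segments and treats that case separately (routing the monotonicity through $q_{j+1}$, which is at distance at least $r'_n/2$ from one of the two points), uses per-segment constants $\beta_j$ depending on $L_{j,j+1}=\|q_{j+1}-q_j\|$, and sets $\beta=\min_j\beta_j/2$. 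Your refinement of the landmark placement is legitimate because the paper's description of $P$ only requires $\|p_i-p_{i+1}\|\le r'_n$ and its own proof tacitly assumes $\|p_i-p_{i+1}\|=r'_n$, so some lower bound on the spacing is needed in any case. The payoff of your route is a single explicit, segment-independent $\beta$ and no case analysis; the paper's route avoids modifying the construction of $P$ at the cost of the extra corner case.
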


\begin{proof}
  First, assume that there exists $1\leq j<k$ for which
  $p_i,p_{i+1}\in \im(\nu_{j,j+1})$, and note that by definition the
  straight line from $p_i$ to $p_{i+1}$ is a subsegment
  of~$\nu_{j,j+1}$. Now, recall that $q_j\preceq_{\delta/2}q_{j+1}$
  and $\|p_i-p_{i+1}\|=r'_n$. Additionally, denote by
  $L_{j,j+1}=\|q_j-q_{j+1}\|$ and note that this value is a constant
  independent of $n$. Finally, let $\beta_j$ be the maximal value for
  which $p_i\preceq_{\beta_j r_n} p_{i+1}$ and observe that
  $\frac{L_{j,j+1}}{r'_n}=\frac{\delta/2}{\beta_j r_n}$. Thus,
  $\beta_j = \frac{\psi\delta}{2(2+\psi)}$.

  Now consider the case where
  $p_i\in \im(\nu_{j,j+1}), p_{i+1}\in \im(\nu_{j+1,j+2})$. Without
  loss of generality, assume that
  $\|p_i-q_{j+1}\|\geq \|p_{i+1}-q_{j+1}\|$. Thus,
  $p_i\preceq_{\beta_j r_n / 2} q_{j+1}$, which implies that
  $p_i\preceq_{\beta_j r_n/2} p_{i+1}$. Finally, define
  $\beta=\min_{1\leq j<k}\{\beta_j/2\}$. 
\end{proof}

\begin{corollary}\label{cor:monotone}
  Suppose that $p_i,p_{i+1}\in P$ for some $1\leq i\leq M_n$. Then for
  any two points $x_i,x_{i+1}\in \X_n$ such that
  $x_i\in \B_{\beta' r_n}(p_i), x_{i+1}\in \B_{\beta' r_n}(p_{i+1})$,
  where $\beta'\leq \beta /2$, it holds that $x_i\preceq x_{i+1}$.
\end{corollary}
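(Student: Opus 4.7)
The plan is to deduce the coordinate-wise inequality $x_i \preceq x_{i+1}$ directly from the previous lemma by a simple triangle-inequality argument on each coordinate. The key point is that the preceding lemma gives a margin of $\beta r_n$ between $p_i$ and $p_{i+1}$ in every coordinate, while the hypothesis only allows $x_i$ and $x_{i+1}$ to drift by $\beta' r_n$ from these landmarks. Choosing $\beta' \le \beta/2$ is exactly what is needed to absorb the drift on both ends.

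Concretely, I would fix an arbitrary coordinate index $1 \le j \le d$ and decompose
\[
(x_{i+1})_j - (x_i)_j = \bigl[(p_{i+1})_j - (p_i)_j\bigr] + \bigl[(x_{i+1})_j - (p_{i+1})_j\bigr] + \bigl[(p_i)_j - (x_i)_j\bigr].
\]
The first bracket is at least $\beta r_n$ by the preceding lemma, since $p_i \preceq_{\beta r_n} p_{i+1}$. For the remaining two brackets I would use the fact that each coordinate of a vector is bounded in absolute value by its Euclidean norm, so each is at most $\beta' r_n$ in magnitude (the hypothesis places $x_i, x_{i+1}$ in Euclidean balls of that radius around $p_i, p_{i+1}$). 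Putting these bounds together yields
\[
(x_{i+1})_j - (x_i)_j \ge \beta r_n - 2\beta' r_n \ge 0,
\]
where the final inequality uses $\beta' \le \beta/2$. Since $j$ was arbitrary, $x_i \preceq x_{i+1}$ follows.

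There is no real obstacle here: the whole statement is essentially a ``budget'' calculation, and all the work has already been done in the previous lemma, which secured the non-trivial monotone margin $\beta r_n$ between consecutive landmarks. The only thing to be careful about is the direction of the inequalities when passing from Euclidean distance to coordinate-wise distance (both $(p_i)_j - (x_i)_j$ and $(x_{i+1})_j - (p_{i+1})_j$ could a priori be negative, so one should phrase the estimate as $\lvert (x)_j - (p)_j \rvert \le \|x - p\|$ and then take the worst case). Once that is handled, the choice $\beta' \le \beta/2$ makes the margin exactly tight, completing the corollary.
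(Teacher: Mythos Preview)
Your argument is correct and is exactly the intended one: the paper states this corollary without proof, treating it as an immediate consequence of the preceding lemma, and your coordinate-wise triangle-inequality computation is precisely the natural way to spell out that consequence.
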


This motivates the construction of another set of balls.  For every
$1\leq i\leq M_n$ define the set
$B_{n,i}^\beta:=\B_{\beta r_n/2}(p_i)$.  Additionally, define
$K_n^\beta$ to be the number of $B_{n,i}^\beta$ balls which do not
contain samples from $\X_n$. Formally,
$K_n^\beta:=\left|\left\{B_{n,i}^\beta\cap
    \X_n=\emptyset\right\}_{i=1}^{M_n}\right|$.
We borrow the following lemma from~\cite[Lemma~4.3]{JanETAL15}.

\begin{lemma}\label{lem:pavone2}
  For any fixed $\alpha\in (0,1)$ it holds that $K_n^\beta<\alpha M_n$
  \as
\end{lemma}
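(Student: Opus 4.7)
The plan is to establish $K_n^\beta<\alpha M_n$ almost surely via the Borel--Cantelli lemma, by showing $\sum_n P(K_n^\beta \geq \alpha M_n)<\infty$. The obstacle with standard Markov or Chebyshev bounds is that they decay only as an $n$-polynomial whose exponent is proportional to $\beta^d$ and hence need not be summable when $\beta$ is small; instead, I would use a Chernoff-style union bound over sub-collections of \emph{disjoint} balls to obtain stretched-exponential decay.

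First I would fix the basic asymptotics. Since $\nu_{1,k}$ has length independent of~$n$ and the landmarks $p_1,\dots,p_{M_n}$ are placed at spacing at most $r'_n=r_n/(2(2+\psi))$, we have $M_n=\Theta(1/r_n)=\Theta\bigl((n/\log n)^{1/d}\bigr)$. Each ball $B_{n,i}^\beta$ has Lebesgue measure $v_n=\theta_d(\beta r_n/2)^d$, giving $nv_n=c\log n$ with $c:=\theta_d(\beta/2)^d\gamma^d>0$. Second, I would reduce to a disjoint-balls setting. Because $q_j\preceq_{\delta/2}q_{j+1}$, landmarks sitting on non-adjacent segments $\nu_{j,j+1}$ and $\nu_{j',j'+1}$ with $|j'-j|\ge 2$ differ by at least $\delta/2$ in every coordinate, so their Euclidean separation is at least $\sqrt d\,\delta/2$; for~$n$ large enough that $\beta r_n<\sqrt d\,\delta/2$, their balls are disjoint. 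Overlaps can therefore only arise within a single segment or across two consecutive segments, and on each such straight piece the landmarks are collinear at spacing~$r'_n$, so each ball overlaps at most $K=O(\beta(2+\psi))$ others. Partitioning $\{1,\dots,M_n\}$ by residue modulo $K+1$ produces classes $C_1,\dots,C_{K+1}$ whose balls are pairwise disjoint inside each class.

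Third I would bound the tail probability. If $K_n^\beta\geq\alpha M_n$, pigeonholing over the $K+1$ classes yields a class $C_\ell$ with at least $s_n:=\lceil\alpha M_n/(K+1)\rceil$ empty balls. For any fixed sub-collection $S\subseteq C_\ell$ of $s_n$ disjoint balls, the total volume is $s_n v_n$, so the joint empty-probability is
\begin{equation*}
P\!\left(\bigcap_{i\in S}\{B_{n,i}^\beta\cap\X_n=\emptyset\}\right)=(1-s_n v_n)^n\leq \exp(-s_n n v_n),
\end{equation*}
valid for $n$ large enough that $s_n v_n=O(r_n^{d-1})<1$. A union bound over the $K+1$ classes and over the $\binom{|C_\ell|}{s_n}\leq (eM_n/s_n)^{s_n}=(e(K+1)/\alpha)^{s_n}$ choices inside each class yields
\begin{equation*}
P(K_n^\beta\geq\alpha M_n)\leq (K+1)\exp\!\Bigl(s_n\bigl(\log(e(K+1)/\alpha)-c\log n\bigr)\Bigr).
\end{equation*}
Once $c\log n\geq 2\log(e(K+1)/\alpha)$, the exponent is at most $-\tfrac12 s_n c\log n=-\Theta\bigl(n^{1/d}(\log n)^{(d-1)/d}\bigr)$, which is summable in~$n$. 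Borel--Cantelli then gives $K_n^\beta<\alpha M_n$ for all but finitely many~$n$ almost surely.

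The main obstacle is the second step: a naive union bound over all $\binom{M_n}{s_n}$ sub-collections would require joint empty-probabilities for \emph{overlapping} balls, which do not factor as $(1-v_n)^{n s_n}$ and would yield a much weaker bound. The monotonicity of $\nu_{1,k}$, through the landmark separation $q_j\preceq_{\delta/2}q_{j+1}$, is the structural ingredient that caps the overlap degree by an $n$-independent constant and allows the reduction to disjoint sub-collections; the remaining Chernoff-style estimate then proceeds by standard bookkeeping.
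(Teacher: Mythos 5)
The paper does not actually prove this lemma---it is imported wholesale from \cite[Lemma~4.3]{JanETAL15}---so your self-contained argument is by construction a different route, and it is a sound one. Your accounting of the scales is right ($M_n=\Theta(1/r_n)$ and $nv_n=\Theta(\log n)$ with $c=\theta_d(\beta/2)^d\gamma^d$), and the essential move---using the geometry of the piecewise-linear, monotone plan $\nu_{1,k}$ to cap the overlap degree of the balls $\B_{\beta r_n/2}(p_i)$ by an $n$-independent constant, and then running the union bound only over sub-collections of pairwise \emph{disjoint} balls---is exactly what converts a per-ball emptiness probability of order $n^{-c}$ (where $c$ may well be below $1$ since $\beta$ can be tiny) into a stretched-exponential tail $\exp(-\Theta(M_n\log n))$ for the event $\{K_n^\beta\ge\alpha M_n\}$, which is summable, so Borel--Cantelli gives the \as statement. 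That is a genuine gain over a naive first-moment/Markov argument, which yields only convergence in probability when the exponent is small; the paper simply cites the external lemma and never discusses whether the borrowed bound is strong enough for the almost-sure form it asserts, so your tail estimate in effect settles that point. Two details to tighten, neither fatal: (i) balls centered near the boundary of $[0,1]^d$ (in particular at $\0$ and $\1$) retain only a $2^{-d}$ fraction of their volume inside the cube, so the joint emptiness probability should be bounded by $(1-2^{-d}s_nv_n)^n$, which only rescales the constant $c$; (ii) your claim that overlap forces index-proximity is argued only segmentwise, and for steps straddling a corner $q_{j+1}$ the cleanest fix is to invoke monotonicity directly: taking the spacing to be exactly $r'_n$ (as in the preceding lemma), the coordinatewise displacements of consecutive landmark steps add without cancellation, hence $\|p_i-p_{i'}\|\ge\|p_i-p_{i'}\|_1/\sqrt{d}\ge |i-i'|\,r'_n/\sqrt{d}$, so overlapping balls satisfy $|i-i'|\le 2\sqrt{d}\,\beta(2+\psi)$; this covers the within-segment and cross-corner cases at once and makes your residue-class partition airtight.
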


\begin{corollary}
  For any fixed $\alpha\in (0,1)$ the following holds \as: $\L_n$
  contains a path $\nu_n$ connecting $\0$ to $\1$ such that
  $\M(\nu_n)\leq (1+\eps)\M(\nu^*)$, and with at most $2\alpha M_n$
  non-monotone edges.
\end{corollary}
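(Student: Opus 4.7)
The plan is to fuse the constructions from Corollary~\ref{cor:quality} and Corollary~\ref{cor:monotone} by choosing, at each landmark $p_i$, a sample from the \emph{smaller} ball $B_{n,i}^\beta$ whenever one is available, and falling back to the larger ball $B_{n,i}$ otherwise. The two \as statements, Lemma~\ref{lem:pavone1} and Lemma~\ref{lem:pavone2}, can be intersected so that we may condition on the joint event that every $B_{n,i}$ meets $\X_n$ while at most $K_n^\beta<\alpha M_n$ of the $B_{n,i}^\beta$ fail to meet $\X_n$.

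Conditioning on this event, for each $1\leq i\leq M_n$ define $x_i\in\X_n$ by
\[
x_i\in
\begin{cases}
B_{n,i}^\beta\cap\X_n, & \text{if }B_{n,i}^\beta\cap\X_n\neq\emptyset,\\
B_{n,i}\cap\X_n, & \text{otherwise},
\end{cases}
\]
taking $x_1=\0$ and $x_{M_n}=\1$, and call $x_i$ \emph{good} in the first case and \emph{bad} in the second. Let $\nu_n$ be the path $\0=x_1,x_2,\ldots,x_{M_n}=\1$. Since each $x_i\in B_{n,i}$ (radius $r'_n\leq r_n/2$), the edge $(x_i,x_{i+1})$ exists in $\L_n$ and the cost bound $\M(\nu_n)\leq (1+\eps)\M(\nu^*)$ follows exactly as in the proof of Corollary~\ref{cor:quality}.

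It remains to bound the number of non-monotone edges. By Corollary~\ref{cor:monotone}, if both $x_i$ and $x_{i+1}$ are good (so that both lie in their respective $B^\beta$ balls of radius $\beta r_n/2$), then $x_i\preceq x_{i+1}$, making the edge monotone. Hence every non-monotone edge along $\nu_n$ has at least one bad endpoint; since each vertex is incident to at most two edges of $\nu_n$, the number of non-monotone edges is at most $2K_n^\beta$. Applying Lemma~\ref{lem:pavone2} gives the bound $2\alpha M_n$ \as, completing the proof.

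The main obstacle is conceptual rather than technical: one must resist choosing the $x_i$ independently for the quality bound (as in Corollary~\ref{cor:quality}) and for the monotonicity bound, and instead use a single joint selection that simultaneously benefits from both covering events. Once the selection is phrased in this two-tier way, the double counting by vertex incidence immediately converts the sample-free-ball bound of Lemma~\ref{lem:pavone2} into the claimed bound on non-monotone edges, with no further probabilistic work required.
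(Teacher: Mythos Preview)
Your proof is correct and follows essentially the same approach as the paper: the two-tier selection of $x_i$ from $B_{n,i}^\beta$ when possible and from $B_{n,i}$ otherwise, followed by the observation that each bad vertex can spoil at most two edges. Your exposition is slightly more careful than the paper's in that you explicitly intersect the two \as events from Lemma~\ref{lem:pavone1} and Lemma~\ref{lem:pavone2} before making the selection, and you invoke the \emph{proof} of Corollary~\ref{cor:quality} rather than the corollary itself for the cost bound, but the argument is otherwise identical.
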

\begin{proof}
  For every $1\leq i\leq M_n$ define $x_i\in \X_n$ to be some point in
  $\X_n\cap B_{n,i}^\beta$ if this set is non-empty, and otherwise
  some point in $B_{n,i}$. By Corollary~\ref{cor:quality}, the points
  $x_1,\ldots,x_{M_n}$ induce a path $\nu_n$ such that
  $\M(\nu_m)\leq (1+\eps)\M(\nu^*)$, and such a path exists \as By
  Corollary~\ref{cor:monotone}, every two consecutive points
  $x_i,x_{i+1}$ such that
  $x_i\in B_{n,i}^\beta, x_{i+1}\in B_{n,i+1}^\beta$ contribute one
  monotone edge to $\nu_n$. Conversely, each point
  $x_i\not\in B_{n,i}^\beta$ contributes at most two non-monotone
  edges. By Lemma~\ref{lem:pavone2}, there are at most $\alpha M_n$
  points of the latter type \as 
\end{proof}

It only remains to replace the two constants $\eps,\alpha$ with $o(1)$
in order to obtain the exact formulation of
Theorem~\ref{thm:main_new}. Since we have made no assumptions
concerning these two values, they can be replaced with the sequences
$\eps_i=1/i, \alpha_{i'}=1/i'$, where $i,i'\in \dN$. See more details
in~\cite[Theorem~6]{ssh-fne13}.

\section{Experiments}\label{sec:experiments}
\begin{figure*} \centering 
  \begin{subfigure}{0.28\textwidth}\centering
    \includegraphics[height=0.99\textwidth, angle=90,
    origin=c]{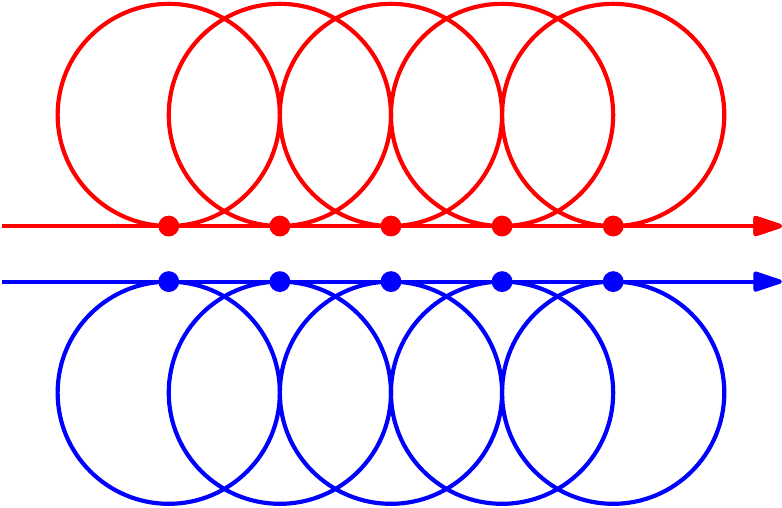}
   \caption{\label{fig:scenario_frechet}}
  \end{subfigure}
  ~
  \begin{subfigure}{0.5\textwidth}\centering
    \includegraphics[width=0.95\textwidth]{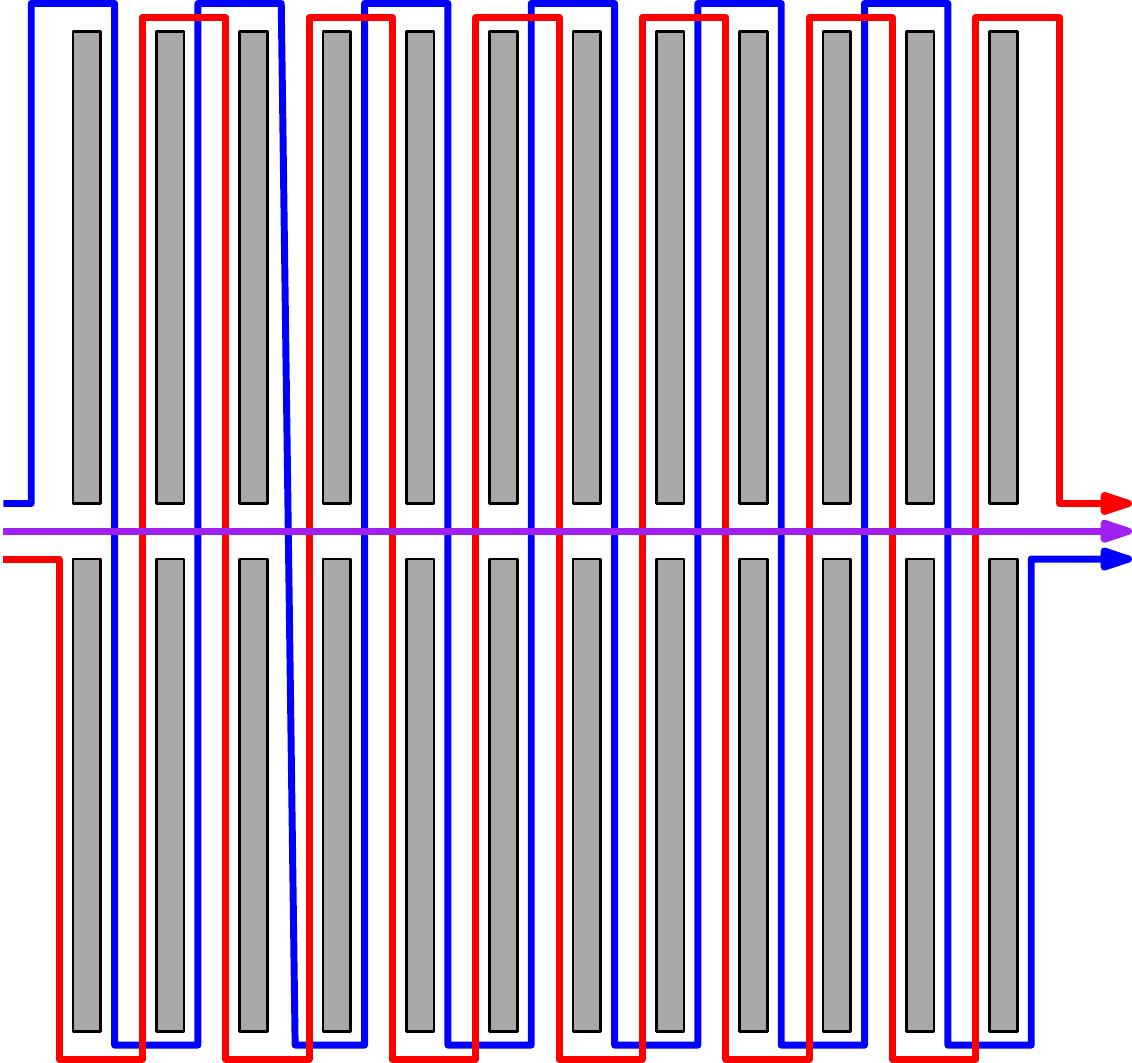}
    \caption{\label{fig:scenario_follow}}
  \end{subfigure}
  ~
  \begin{subfigure}{0.5\textwidth}\centering
    \includegraphics[height=0.95\textwidth]{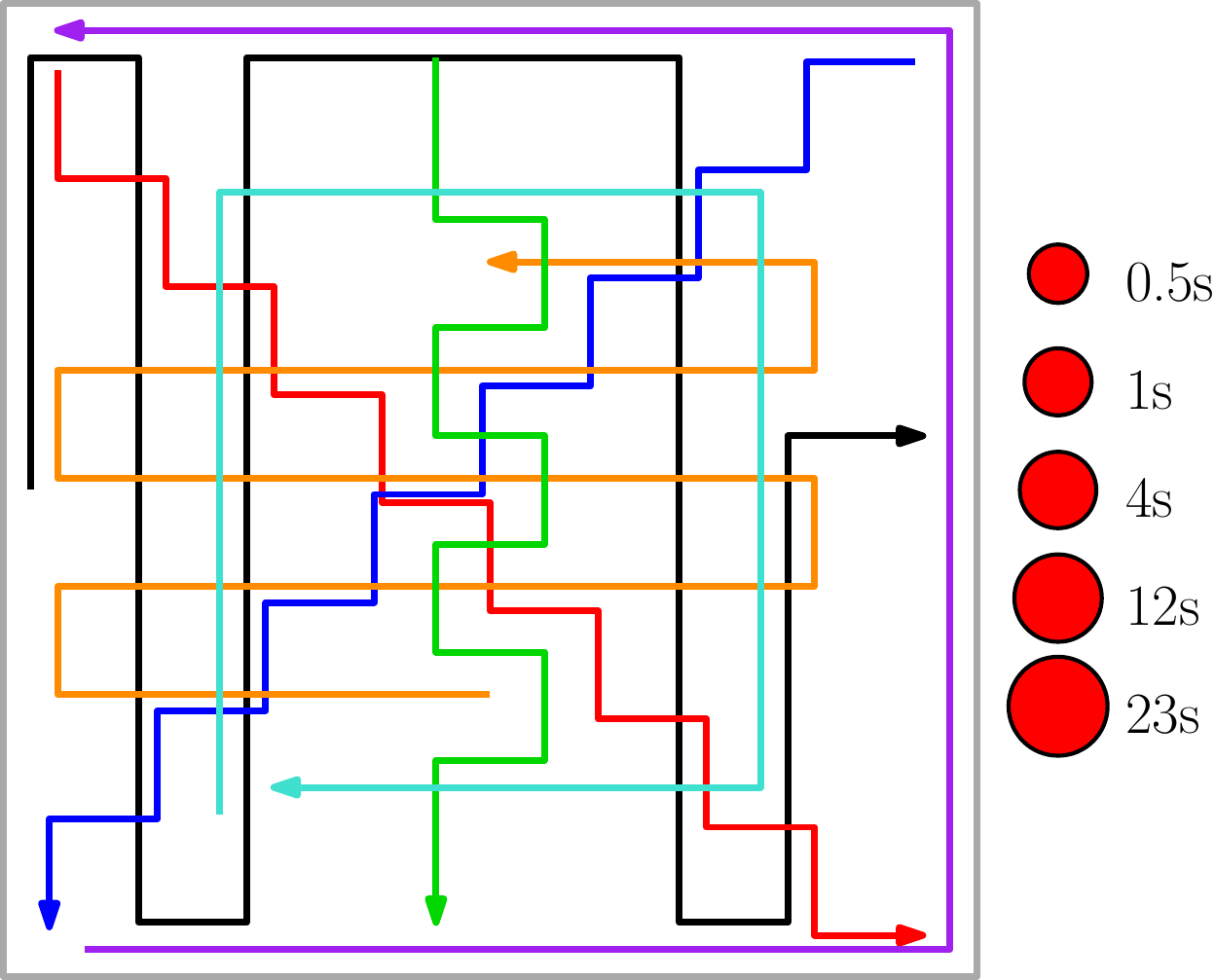}
    \caption{\label{fig:scenario_safety}}
  \end{subfigure}
  \caption{(a) Scenario for \textbf{(P1)}, for $d=2$: the right (blue)
    curve consists of five circular loops of radius $0.15$, where the
    entrance and exit point to each circle is indicated by a
    bullet. The left red curve is similarly defined, and the two
    curves are separated by a vertical distance of $0.04$. The optimal
    matching of cost $0.34$ is obtained in the following manner: when
    a given circle of the red curve is traversed, the position along
    the blue curve is fixed to the entrance point of the circle
    directly to the right of the traversed circle, and vice versa. (b)
    Scenario for \textbf{(P2)}: two agents moving along the red and
    blue curves, respectively, must minimize the distance to the
    leader on the horizontal purple curve while maintaining visibility
    with the leader. (c) Scenario for \textbf{(P3)}: finding the
    safest coordination between $d=7$ moving agents whose routes are
    depicted as red, blue, turquoise, orange, green, black, and purple
    curves. The red circles on the right represent the amount of
    separation obtained by \alg for the specified running time: given
    a plan returned by \alg, it induces a maximal radius of a disc,
    such that when $d$ copies of this disc are centered on the moving
    agents, they are guaranteed to remain collision-free.}
\end{figure*}

In this section we report on experimental results of applying \alg to
several challenging problems. We also compare its performance with
T-RRT*, which is the state-of-the-art for planning on cost maps. Since
BTT is concerned with optimality, we compare it against T-RRT* rather
than with its non-optimal original version T-RRT. Our technique is
faster than T-RRT* by several orders of magnitude. We mention that in
a previous work~\cite{SolHal16} we experimented on similar problems
but with a simpler PRM-flavor technique, which is significantly slower
than \alg. Due to this fact we do not compare against it. We also note
that the we compute fully monotone paths in all the experiments
reported below, using the formulation described in Alg.~\ref{alg:btt}.

We implemented \alg and T-RRT* in \Cpp, and tested them on scenarios
involving two-dimensional objects.  \alg has only two parameters,
which are the number of samples $n$ and the connection radius
$r_n$. The latter was set to the value described in
Theorem~\ref{thm:main_new} with $\eta=1$. We note that even though
this theorem only suggests that this value suffices for asymptotic
optimality in the monotone case, the experiments support this claim
also after finite, relatively short, running time. In particular, \alg
obtains an initial solution fairly quickly and converges to the
optimum, in scenarios where the optimum is known.

Our implementation of T-RRT* is based on its
\textsc{ompl}~\cite{OMPL12} implementation. T-RRT* uses a connection
radius, which we set according to~\cite{KarFra11}.  T-RRT* also has
the parameters $T,T_{\text{rate}}$, which are set according to the
guidelines in~\cite{DevETAL16}. The length of the \texttt{extend}
routine, and the rate of target bias, were set according to the
\textsc{ompl} implementation of T-RRT*.

Nearest-neighbor search calls in T-RRT* were performed using
\textsc{flann}~\cite{flann}. As the set of samples of \alg is
generated in one batch, we use the efficient RTG data structure (see,
e.g.,~\cite{KleETAL15}) for nearest-neighbor search. Geometric
objects
were represented with \textsc{cgal}~\cite{cgal}. Experiments were
conducted on a PC with Intel i7-2600 3.4GHz processor with 8GB of
memory, running a 64-bit Windows~7 OS.

\subsection{Scenarios}
We test \alg on several challenging problems, which are described
below. We mention that the underlying implementation of \alg and
T-RRT* does not change between the different scenarios and we only
change the subroutine responsible for the computation of cost-map
values. \vspace{3pt}

\noindent\textbf{(P1) \frechet matching:}  The goal is to
find a traversal which \emph{minimizes} the pairwise distance between
the traversal points along $d\in\{2,3,4\}$ curves. Recall that given
$d\geq 2$ curves $\sigma_1,\ldots,\sigma_d:[0,1]\rightarrow \dR^2$,
the cost map $\M:[0,1]^d\rightarrow \dR_{+}$ induced by the problem of
\frechet matching~\cite{AltGod95} is defined to be
$\M(\tau_1,\ldots,\tau_d)=\max_{1\leq i<j\leq
  d}\|\sigma_i(\tau_i)-\sigma_j(\tau_j)\|$
for $\tau_1,\ldots,\tau_d\in [0,1]$.  The scenario for $d=2$ is
depicted in Fig.~\ref{fig:scenario_frechet}. For~$d=3$ we add an
identical blue curve, and for $d=4$ another red curve. Note that the
optimum is the same for all values of $d$ here. \vspace{3pt}

\noindent\textbf{(P2) Leader following:} The problem consists of three
curves, where the purple one (Fig.~\ref{fig:scenario_follow})
represents the motion of a ``leader'', whereas the blue and the red
curves represent the ``followers''. The goal is to plan the motion of
the three agents such that at least one of the followers sees the
(current position of the) leader at any given time, where the view of
each agent can be obstructed by the gray walls. In addition, we must
minimize the distance between the leader to its closest follower at a
given time. The optimal solution is
attained in the following manner: while the leader passes between two
vertically-opposite walls the blue follower keeps an eye on it while
the red follower goes around a similar set of walls. Then, when the
leader becomes visible to the red agent, the blue agent catches up
with the other two agents in an analogous fashion, and this process is
repeated.\vspace{3pt}

\noindent\textbf{(P3) Safest coordination:} This problem consists of finding the
\emph{safest coordination} among $d=7$ agents moving along predefined
routes, which represents a complex traffic intersection. The goal is
to find the traversal which \emph{maximizes} the pairwise distance
between the $d$ traversal points. This can be viewed as
``anti''-\frechet coordination. Each route is drawn in a different
color, where the direction is indicated by an arrow (see
Fig.~\ref{fig:scenario_safety}).\vspace{3pt}

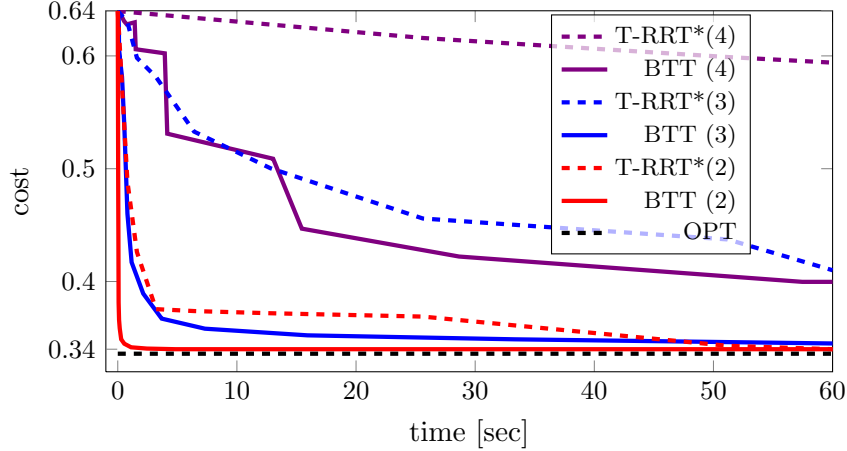
\begin{figure}\centering 
    \begin{tikzpicture}
    \begin{axis}[
      height=0.4 \columnwidth, width=0.7\columnwidth,
      xmin=-1, xmax=60, ymin=0.32, ymax=0.64,
      extra y ticks={0.34,0.64},
      xlabel={time [sec]},
      ylabel={cost},
      legend style={at={(0.75,0.99)}, anchor=north, cells={anchor=east}},fill=white, fill opacity=0.7, draw opacity=1,text opacity=1 ]
            ]
      \pgfplotsset{
        tick label style={font=\small},
        label style={font=\normalsize},
        legend style={font=\footnotesize}
      }
      
    \addplot[dashed,violet, ultra thick] plot coordinates {
      (0.00,0.63972)
      (25.6154,0.61588)
      (51.212,0.59898)
      (60.4116,0.59379)
    }; 
    \addlegendentry{T-RRT*(4)}
    
    \addplot[solid,violet, ultra thick] plot coordinates {
      (0.1714,0.63848)
      (0.2934,0.63789)
      (0.3434,0.63465)
      (0.5306,0.63082)
      (0.7554,0.62797)
      (1.421,0.62976)
      (1.5068,0.6059)
      (3.9446,0.60228)
      (4.1546,0.53105)
      (13.0416,0.50883)
      (15.4486,0.44687)
      (28.645,0.42215)
      (57.457,0.3997)
      (60.0,0.3997)
    }; 
    \addlegendentry{\alg(4)}

    \addplot[dashed,blue, ultra thick] plot coordinates {
      (0.00,0.63972)
      (0.8082,0.63001)
      (1.6094,0.59827)
      (3.2134,0.58123)
      (6.4072,0.5329)
      (12.8314,0.50034)
      (25.6088,0.45586)
      (51.2068,0.43729)
      (60,0.41)
    }; 
    \addlegendentry{T-RRT*(3)}
    
    \addplot[solid,blue, ultra thick] plot coordinates {
     (0.0622,0.63972)
     (0.0934,0.62123)
     (0.1188,0.58934)
     (0.1872,0.59677)
     (0.3558,0.58147)
     (0.5514,0.53759)
     (0.8108,0.46019)
     (1.1632,0.41701)
     (2.1372,0.38929)
     (3.6884,0.36711)
     (7.3136,0.35819)
     (15.9178,0.35229)
     (33.338,0.34876)
     (60,0.34508)
    }; 
    \addlegendentry{\alg(3)}

    \addplot[dashed,red, ultra thick] plot coordinates {
      (0.00,0.63972)
      (0.811,0.48841)
      (1.608,0.42614)
      (3.2082,0.37551)
      (6.4098,0.37381)
      (12.806,0.37188)
      (25.6104,0.36891)
      (51.201, 0.343)
      (60.0, 0.34)
    }; 
    \addlegendentry{T-RRT*(2)}
    
    \addplot[solid,red, ultra thick] plot coordinates {
      (0.00,0.63972)
      (0.037,0.48472)
      (0.0502,0.44821)
      (0.0716,0.38132)
      (0.1092,0.36596)
      (0.1656,0.35845)
      (0.2868,0.34877)
      (0.5494,0.34485)
      (1.118,0.34169)
      (2.4416,0.34056)
      (4.857,0.34)
      (60.0,0.34)
    }; 
    \addlegendentry{\alg(2)}

    \addplot[dashed,black, ultra thick] plot coordinates {
      (0.0,0.336)
      (60.0,0.336)
    };
    \addlegendentry{OPT}
  \end{axis}
\end{tikzpicture}
\caption{Results of \alg and T-RRT* on \textbf{(P1)} for increasing number
  of curves $2\leq d\leq 4$, indicated in parenthesis. The black
  dashed line represents the optimal cost $0.34$. In order to display
  the quality of the solution of \alg as a function of the running
  time, rather than the number of samples, we ran \alg on an
  increasing number of samples, with ten runs for each number. For
  each $n$ we plot the average cost and the average running time. We
  note that the standard deviation of the running time is very low,
  and so the average accurately captures the typical running time. For
  instance, for $d=3$ the standard deviation ranges between 0.002 and
  0.01. \label{plot:frechet}}
\end{figure}

\subsection{Results}
We report on the running times and the obtained cost values of \alg
and T-RRT*, after averaging over ten runs in each setting. 

We start with \textbf{(P1)}. For $d\in \{2,3\}$, \alg rapidly
converges to the optimum (see Fig.~\ref{plot:frechet}). For $d=4$ the
convergence is slower, although a value relatively close to the
optimum is reached (observe that the trivial solution in which the
curves proceed simultaneously induces a cost of $0.64$). Note that
\alg's performance for $d=3$ is better than that of T-RRT* for
$d=2$. A similar phenomenon occurs for \alg with $d=4$ and T-RRT* for
$d=3$. Lastly, for $d=4$ T-RRT* manages to find a solution that is
only slightly better than the trivial one after~60s. 

In~\textbf{(P2)} \alg attains the (near-)optimal solution described
earlier after 13s on average. Here T-RRT* is unable to obtain any
solution within 60s.  In \textbf{(P3)}, again, T-RRT* is unable to
obtain a solution within 60s. In contrast, \alg obtains a solution in
less than a second. Observe that the solution obtained by it for 22
seconds maintains a rather large safety distance between the agents in
such a complex setting (see caption of
Fig.~\ref{fig:scenario_safety}).  Both in \textbf{(P2)} and
\textbf{(P3)} T-RRT*'s transition test leads to an extremely slow
exploration of the parameter space. We mention that in these cases
RRT* is able to find an initial solution quite rapidly, albeit one of
poor quality. Moreover, the convergence rate of RRT* is very slow and
hardly manages to improve over the solution obtained initially.

Now we analyze in greater detail the running time of \alg as a
function of the number of samples $n$ for the specific scenario in
\textbf{(P3)}.  The running time is dominated by three components:
(i)~construction of the RTG NN-search data structure; (ii)~NN-search
calls; (iii)~computation of the cost map, which corresponds to
collision detection in standard motion planning. \alg's
proportion of running time spent between these components changes with
$n$ (see Fig.~\ref{plot:percentage}). In particular, as $n$ increases
so does the proportion of~NN calls. This trend can be observed in
other planners as well~\cite{KleETAL15}.

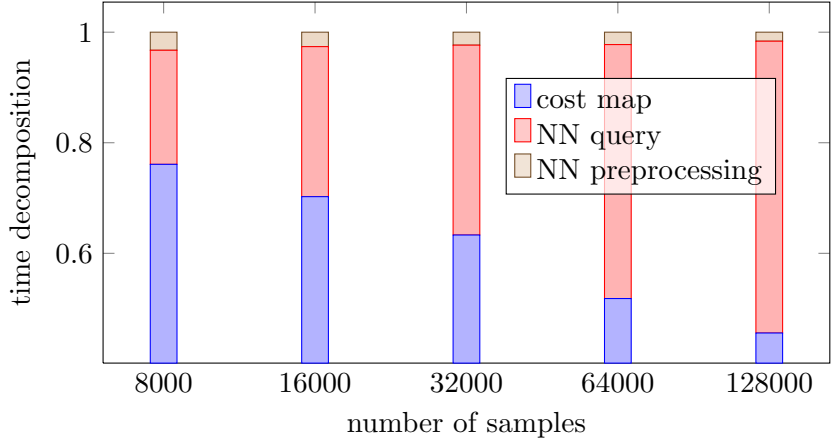
\begin{figure}\centering 
\begin{tikzpicture}
\begin{axis}[
  height=0.4\columnwidth, width=0.7\columnwidth,
    ybar stacked,
    legend style={at={(0.74,0.79)},
      anchor=north, cells={anchor=west}, fill=white, fill opacity=0.7, draw opacity=1,text opacity=1},
    ylabel={time decomposition},
    xlabel={number of samples},
    symbolic x coords={8000,16000,32000,64000,128000},
    xtick=data,
    x tick label style={/pgf/number format/1000 sep=},
    ]
    \addplot+[ybar] plot coordinates {(8000,0.760959) (16000,0.702479)
      (32000,0.633087) (64000,0.518196) (128000,0.455842)};
    \addplot+[ybar] plot coordinates {(8000,0.2063414)
      (16000,0.2714062) (32000,0.3434418) (64000,0.4593758)
      (128000,0.5278556)}; 
    \addplot+[ybar] plot coordinates
    {(8000,0.03269988) (16000,0.02611522) (32000,0.02347034)
      (64000,0.02242788) (128000,0.01630252)}; \legend{cost map, NN
      query, NN preprocessing}
\end{axis}
\end{tikzpicture}
\caption{Time percentage for each of the main components of \alg for
  \textbf{(P3)}.\label{plot:percentage}}
\end{figure}

\section{Discussion and future work}\label{sec:discussion}
We have introduced \alg for sampling-based bottleneck pathfinding over
cost maps. We showed that it manages to cope with complex scenarios of
the problem and outperforms T-RRT* in all tests. In addition to its
efficiency, \alg requires the tuning of only a single parameter: while
the number of samples~$n$ cannot be determined a priori, the
connection radius $r_n$ specified in Theorem~\ref{thm:main_new} proves
to be sufficient for all tested cases. We also provide theoretical
justification for this phenomenon in the same theorem.  Note that this
connection radius is widely used in non-monotone settings of motion
planning (see \textsc{ompl}~\cite{OMPL12}). Thus, we believe that
$r_n$ can be fixed to the aforementioned value.  In contrast, T-RRT*
requires the tuning of two more parameters that are unique to that
planner, as well as those that are inherited from RRT and RRT*.

We do note that \alg might benefit from biasing some of its search
towards the target as is the case in RRT-type planners. We intend to
pursue this direction in future work. We also mention that since \alg
is based on Dijkstra's shortest-path algorithm, it may be applicable
to other cost functions such as integral cost and path length, by
modifying the edge cost function (lines~12,13) and the underlying
discrete path planner (line~8) in Algorithm~\ref{alg:btt}.

Currently the main obstacles to using \alg in more complex settings of
bottleneck pathfinding, e.g., higher dimensions, is the high memory
consumption of the RTG data structure. In particular, for $n=10^{6}$
and $d=7$ the program may exceed the 4\textsc{gb} limit provided by
the OS.  Interestingly, in all the tested cases \alg examines only a
small portion of the actual vertex set of $\G_n$ (at most
$5\%$). Moreover, the examined vertices are usually grouped together in
contiguous regions of the parameter space. This calls for incremental
versions of \alg and RTG which generate samples and preprocess them in
an online fashion.

On the theoretical side, we aim to refine the statement made in
Theorem~\ref{thm:main_new}. Currently we can only ensure that a
solution that is ``almost monotone'' exists, but does a
``fully-monotone'' solution over $\G_n$ exists as well? We conjecture
that this statement is true.

\bibliography{bibliography}

\begin{thebibliography}{51}
\providecommand{\natexlab}[1]{#1}
\providecommand{\url}[1]{\texttt{#1}}
\expandafter\ifx\csname urlstyle\endcsname\relax
  \providecommand{\doi}[1]{doi: #1}\else
  \providecommand{\doi}{doi: \begingroup \urlstyle{rm}\Url}\fi

\bibitem[Adler et~al.(2015)Adler, de~Berg, Halperin, and
  Solovey]{abhs-unlabeled14}
A.~Adler, M.~de~Berg, D.~Halperin, and K.~Solovey.
\newblock Efficient multi-robot motion planning for unlabeled discs in simple
  polygons.
\newblock \emph{{T-ASE}}, 12\penalty0 (4):\penalty0 1309--1317, 2015.

\bibitem[Agarwal et~al.(2016)Agarwal, Fox, and Salzman]{AgaETAL16}
P.~K. Agarwal, K.~Fox, and O.~Salzman.
\newblock An efficient algorithm for computing high-quality paths amid
  polygonal obstacles.
\newblock In \emph{{SODA}}, pages 1179--1192, 2016.

\bibitem[Alt and Godau(1995)]{AltGod95}
H.~Alt and M.~Godau.
\newblock Computing the {F}r{\'{e}}chet distance between two polygonal curves.
\newblock \emph{Int. J. Comput. Geometry Appl.}, 5:\penalty0 75--91, 1995.

\bibitem[Altch{\'{e}} et~al.(2016)Altch{\'{e}}, Qian, and
  de~La~Fortelle]{AltETAL16}
F.~Altch{\'{e}}, X.~Qian, and A.~de~La~Fortelle.
\newblock Time-optimal coordination of mobile robots along specified paths.
\newblock \emph{CoRR}, abs/1603.04610, 2016.

\bibitem[Arslan and Tsiotras(2013)]{ArsTsi13}
O.~Arslan and P.~Tsiotras.
\newblock Use of relaxation methods in sampling-based algorithms for optimal
  motion planning.
\newblock In \emph{{ICRA}}, pages 2421--2428, 2013.

\bibitem[Buchin et~al.(2008)Buchin, Buchin, and Wenk]{BucBucWen08}
K.~Buchin, M.~Buchin, and C.~Wenk.
\newblock Computing the {F}r{\'{e}}chet distance between simple polygons.
\newblock \emph{Comput. Geom.}, 41\penalty0 (1-2):\penalty0 2--20, 2008.

\bibitem[Buchin et~al.(2010)Buchin, Buchin, and Schulz]{BucBucSch08}
K.~Buchin, M.~Buchin, and A.~Schulz.
\newblock {F}r{\'{e}}chet distance of surfaces: Some simple hard cases.
\newblock In \emph{{ESA}}, pages 63--74, 2010.

\bibitem[Buchin et~al.(2014)Buchin, Buchin, Meulemans, and
  Mulzer]{BucBucMeuMul14}
K.~Buchin, M.~Buchin, W.~Meulemans, and W.~Mulzer.
\newblock Four soviets walk the dog - with an application to {A}lt's
  conjecture.
\newblock In \emph{{SODA}}, pages 1399--1413, 2014.

\bibitem[Buchin et~al.(2016)Buchin, Buchin, Konzack, Mulzer, and
  Schulz]{BucETAL16}
K.~Buchin, M.~Buchin, M.~Konzack, W.~Mulzer, and A.~Schulz.
\newblock Fine-grained analysis of problems on curves.
\newblock In \emph{{EuroCG}}, 2016.

\bibitem[Cafieri and Durand(2014)]{CafDur14}
S.~Cafieri and N.~Durand.
\newblock Aircraft deconfliction with speed regulation: new models from
  mixed-integer optimization.
\newblock \emph{Journal of Global Optimization}, 58\penalty0 (4):\penalty0
  613--629, 2014.

\bibitem[Chambers et~al.(2010)Chambers, de~Verdi{\`{e}}re, Erickson, Lazard,
  Lazarus, and Thite]{ChaETAL10}
E.~W. Chambers, {\'{E}}.~C. de~Verdi{\`{e}}re, J.~Erickson, S.~Lazard,
  F.~Lazarus, and S.~Thite.
\newblock Homotopic {F}r{\'{e}}chet distance between curves or, walking your
  dog in the woods in polynomial time.
\newblock \emph{Comput. Geom.}, 43\penalty0 (3):\penalty0 295--311, 2010.

\bibitem[Cook and Wenk(2010)]{CooWen10}
A.~F. Cook and C.~Wenk.
\newblock Geodesic {F}r{\'{e}}chet distance inside a simple polygon.
\newblock \emph{{ACM} Transactions on Algorithms}, 7\penalty0 (1):\penalty0 9,
  2010.

\bibitem[Cormen et~al.(2009)Cormen, Leiserson, Rivest, and Stein]{CorETAL09}
T.~H. Cormen, C.~E. Leiserson, R.~L. Rivest, and C.~Stein.
\newblock \emph{Introduction to Algorithms {(3.} ed.)}.
\newblock {MIT} Press, 2009.

\bibitem[Dasler and Mount(2015)]{DasMou15}
P.~Dasler and D.~M. Mount.
\newblock On the complexity of an unregulated traffic crossing.
\newblock In \emph{WADS}, pages 224--235, 2015.

\bibitem[de~Berg and van Kreveld(1997)]{BerKre97}
M.~de~Berg and M.~J. van Kreveld.
\newblock Trekking in the alps without freezing or getting tired.
\newblock \emph{Algorithmica}, 18\penalty0 (3):\penalty0 306--323, 1997.

\bibitem[Devaurs et~al.(2016)Devaurs, Sim{\'{e}}on, and
  Cort{\'{e}}s]{DevETAL16}
D.~Devaurs, T.~Sim{\'{e}}on, and J.~Cort{\'{e}}s.
\newblock Optimal path planning in complex cost spaces with sampling-based
  algorithms.
\newblock \emph{{T-ASE}}, 13\penalty0 (2):\penalty0 415--424, 2016.

\bibitem[Geraerts and Overmars(2007)]{GerOve07}
R.~Geraerts and M.~Overmars.
\newblock Creating high-quality paths for motion planning.
\newblock \emph{I.~J. Robotics Res.}, 26\penalty0 (8):\penalty0 845--863, 2007.

\bibitem[Har{-}Peled and Raichel(2014)]{HarRai14}
S.~Har{-}Peled and B.~Raichel.
\newblock The {F}r{\'{e}}chet distance revisited and extended.
\newblock \emph{{ACM} Transactions on Algorithms}, 10\penalty0 (1):\penalty0 3,
  2014.

\bibitem[Holladay and Srinivasa(2016)]{HolSri16}
R.~Holladay and S.~Srinivasa.
\newblock Distance metrics and algorithms for task space path optimization.
\newblock In \emph{{IROS}}, 2016.
\newblock to appear.

\bibitem[Hopcroft et~al.(1984)Hopcroft, Schwartz, and Sharir]{hss-cmpmio}
J.~E. Hopcroft, J.~T. Schwartz, and M.~Sharir.
\newblock On the complexity of motion planning for multiple independent
  objects; {PSPACE}-hardness of the ``{W}arehouseman's problem''.
\newblock \emph{I. J. Robotic Res.}, 3\penalty0 (4):\penalty0 76--88, 1984.

\bibitem[Jaillet et~al.(2010)Jaillet, Cort{\'{e}}s, and
  Sim{\'{e}}on]{JaiETAK10}
L.~Jaillet, J.~Cort{\'{e}}s, and T.~Sim{\'{e}}on.
\newblock Sampling-based path planning on configuration-space costmaps.
\newblock \emph{Trans. Robotics}, 26\penalty0 (4):\penalty0 635--646, 2010.

\bibitem[Janson et~al.(2015)Janson, Schmerling, Clark, and Pavone]{JanETAL15}
L.~Janson, E.~Schmerling, A.~A. Clark, and M.~Pavone.
\newblock Fast marching tree: {A} fast marching sampling-based method for
  optimal motion planning in many dimensions.
\newblock \emph{I. J. Robotic Res.}, 34\penalty0 (7):\penalty0 883--921, 2015.

\bibitem[Karaman and Frazzoli(2011)]{KarFra11}
S.~Karaman and E.~Frazzoli.
\newblock Sampling-based algorithms for optimal motion planning.
\newblock \emph{I. J. Robotic Res.}, 30\penalty0 (7):\penalty0 846--894, 2011.

\bibitem[Kavraki et~al.(1996{\natexlab{a}})Kavraki, Kolountzakis, and
  Latombe]{KavETAL96}
L.~E. Kavraki, M.~N. Kolountzakis, and J.~Latombe.
\newblock Analysis of probabilistic roadmaps for path planning.
\newblock In \emph{{ICRA}}, pages 3020--3025, 1996{\natexlab{a}}.

\bibitem[Kavraki et~al.(1996{\natexlab{b}})Kavraki, \v{S}vestka, Latombe, and
  Overmars]{kslo-prm}
L.~E. Kavraki, P.~\v{S}vestka, J.-C. Latombe, and M.~H. Overmars.
\newblock Probabilistic roadmaps for path planning in high dimensional
  configuration spaces.
\newblock \emph{Trans. Robotics}, 12\penalty0 (4):\penalty0 566--580,
  1996{\natexlab{b}}.

\bibitem[Kleinbort et~al.(2015)Kleinbort, Salzman, and Halperin]{KleETAL15}
M.~Kleinbort, O.~Salzman, and D.~Halperin.
\newblock Efficient high-quality motion planning by fast all-pairs
  r-nearest-neighbors.
\newblock In \emph{{ICRA}}, pages 2985--2990, 2015.

\bibitem[Kuffner and LaValle(2000)]{l-rert}
J.~J. Kuffner and S.~M. LaValle.
\newblock {RRT}-{C}onnect: An efficient approach to single-query path planning.
\newblock In \emph{ICRA}, pages 995--1001, 2000.

\bibitem[Mitchell(2004)]{Mitchell04}
J.~S.~B. Mitchell.
\newblock Shortest paths and networks.
\newblock In \emph{Handbook of Discrete and Computational Geometry, Second
  Edition.}, pages 607--641. 2004.

\bibitem[Muja and Lowe(2009)]{flann}
M.~Muja and D.~G. Lowe.
\newblock {Fast approximate nearest neighbors with automatic algorithm
  configuration}.
\newblock In \emph{VISSAPP}, pages 331--340. INSTICC Press, 2009.

\bibitem[Nechushtan et~al.(2010)Nechushtan, Raveh, and Halperin]{NecETAL10}
O.~Nechushtan, B.~Raveh, and D.~Halperin.
\newblock Sampling-diagram automata: {A} tool for analyzing path quality in
  tree planners.
\newblock In \emph{WAFR}, pages 285--301, 2010.

\bibitem[{\'{O}}'D{\'{u}}nlaing and Yap(1985)]{ODuYap85}
C.~{\'{O}}'D{\'{u}}nlaing and C.~Yap.
\newblock A "retraction" method for planning the motion of a disc.
\newblock \emph{J. Algorithms}, 6\penalty0 (1):\penalty0 104--111, 1985.

\bibitem[Pokorny et~al.(2016)Pokorny, Goldberg, and Kragic]{PokETAL16}
F.~Pokorny, K.~Goldberg, and D.~Kragic.
\newblock Topological trajectory clustering with relative persistent homology.
\newblock In \emph{{ICRA}}, pages 16--23, 2016.

\bibitem[Raveh et~al.(2011)Raveh, Enosh, and Halperin]{RavETAL11}
B.~Raveh, A.~Enosh, and D.~Halperin.
\newblock A little more, a lot better: Improving path quality by a path-merging
  algorithm.
\newblock \emph{Trans. Robotics}, 27\penalty0 (2):\penalty0 365--371, 2011.

\bibitem[Reif(1979)]{Rei79}
J.~H. Reif.
\newblock Complexity of the mover’s problem and generalizations: {E}xtended
  abstract.
\newblock In \emph{{FOCS}}, pages 421--427, 1979.

\bibitem[Salzman and Halperin(2016)]{SalHal14a}
O.~Salzman and D.~Halperin.
\newblock Asymptotically near-optimal {RRT} for fast, high-quality motion
  planning.
\newblock \emph{Trans. Robotics}, 32\penalty0 (3):\penalty0 473--483, 2016.

\bibitem[Sharir(2004)]{Sharir04}
M.~Sharir.
\newblock Algorithmic motion planning.
\newblock In J.~E. Goodman and J.~O'Rourke, editors, \emph{Handbook of Discrete
  and Computational Geometry, Second Edition.}, pages 1037--1064. Chapman and
  Hall/CRC, 2004.

\bibitem[Simeon et~al.(2002)Simeon, Leroy, and Laumond]{SimETAL02}
T.~Simeon, S.~Leroy, and J.~P. Laumond.
\newblock Path coordination for multiple mobile robots: a resolution-complete
  algorithm.
\newblock \emph{Trans. Robotics}, 18\penalty0 (1):\penalty0 42--49, 2002.

\bibitem[Solovey and Halperin(2015)]{SolHal15}
K.~Solovey and D.~Halperin.
\newblock On the hardness of unlabeled multi-robot motion planning.
\newblock In \emph{{RSS}}, 2015.

\bibitem[Solovey and Halperin(2016)]{SolHal16}
K.~Solovey and D.~Halperin.
\newblock Sampling-based bottleneck pathfinding with applications to
  {F}r\'echet matching.
\newblock In \emph{{ESA}}, pages 76:1--76:16, 2016.

\bibitem[Solovey et~al.(2015)Solovey, Yu, Zamir, and Halperin]{SolYuZamHal15}
K.~Solovey, J.~Yu, O.~Zamir, and D.~Halperin.
\newblock Motion planning for unlabeled discs with optimality guarantees.
\newblock In \emph{{RSS}}, 2015.

\bibitem[Solovey et~al.(2016{\natexlab{a}})Solovey, Salzman, and
  Halperin]{SolETAL16}
K.~Solovey, O.~Salzman, and D.~Halperin.
\newblock New perspective on sampling-based motion planning via random
  geometric graphs.
\newblock In \emph{{RSS}}, 2016{\natexlab{a}}.

\bibitem[Solovey et~al.(2016{\natexlab{b}})Solovey, Salzman, and
  Halperin]{ssh-fne13}
K.~Solovey, O.~Salzman, and D.~Halperin.
\newblock Finding a needle in an exponential haystack: {D}iscrete {RRT} for
  exploration of implicit roadmaps in multi-robot motion planning.
\newblock \emph{I. J. Robotic Res.}, 35\penalty0 (5):\penalty0 501--513,
  2016{\natexlab{b}}.

\bibitem[Spensieri et~al.(2013)Spensieri, Bohlin, and Carlson]{SpeETAL13}
D.~Spensieri, R.~Bohlin, and J.~S. Carlson.
\newblock Coordination of robot paths for cycle time minimization.
\newblock In \emph{{CASE}}, pages 522--527, 2013.

\bibitem[Spirakis and Yap(1984)]{sy-snp84}
P.~G. Spirakis and C.-K. Yap.
\newblock Strong {NP}-hardness of moving many discs.
\newblock \emph{Information Processing Letters}, 19\penalty0 (1):\penalty0
  55--59, 1984.

\bibitem[{\c{S}}ucan et~al.(2012){\c{S}}ucan, Moll, and Kavraki]{OMPL12}
I.~A. {\c{S}}ucan, M.~Moll, and L.~E. Kavraki.
\newblock The {O}pen {M}otion {P}lanning {L}ibrary.
\newblock \emph{{IEEE} Robotics \& Automation}, 19\penalty0 (4):\penalty0
  72--82, 2012.
\newblock \url{http://ompl.kavrakilab.org}.

\bibitem[{The CGAL Project}(2016)]{cgal}
{The CGAL Project}.
\newblock \emph{{CGAL} user and reference manual}.
\newblock {CGAL editorial board}, {4.8} edition, 2016.
\newblock URL \url{http://www.cgal.org/}.

\bibitem[Turpin et~al.(2013)Turpin, Michael, and Kumar]{tmk-cap13}
M.~Turpin, N.~Michael, and V.~Kumar.
\newblock Concurrent assignment and planning of trajectories for large teams of
  interchangeable robots.
\newblock In \emph{{ICRA}}, pages 842--848, 2013.

\bibitem[van~den Berg and Overmars(2005)]{vanBerOve05}
J.~P. van~den Berg and M.~H. Overmars.
\newblock Prioritized motion planning for multiple robots.
\newblock In \emph{{IROS}}, pages 430--435, 2005.

\bibitem[Voss et~al.(2015)Voss, Moll, and Kavraki]{VosETAL15}
C.~Voss, M.~Moll, and L.~E. Kavraki.
\newblock A heuristic approach to finding diverse short paths.
\newblock In \emph{{ICRA}}, pages 4173--4179, 2015.

\bibitem[Wein et~al.(2007)Wein, van~den Berg, and Halperin]{WeiETAL07}
R.~Wein, J.~P. van~den Berg, and D.~Halperin.
\newblock The visibility-voronoi complex and its applications.
\newblock \emph{Comput. Geom.}, 36\penalty0 (1):\penalty0 66--87, 2007.

\bibitem[Wein et~al.(2008)Wein, van~den Berg, and Halperin]{WeiETAL08}
R.~Wein, J.~P. van~den Berg, and D.~Halperin.
\newblock Planning high-quality paths and corridors amidst obstacles.
\newblock \emph{I. J. Robotic Res.}, 27\penalty0 (11-12):\penalty0 1213--1231,
  2008.

\end{thebibliography}

\end{document}